\documentclass{article}

\usepackage[utf8]{inputenc} 
\usepackage[T1]{fontenc}    
\usepackage[hidelinks]{hyperref}       
\usepackage{url}            
\usepackage{booktabs}       
\usepackage{amsfonts}       
\usepackage{nicefrac}       
\usepackage{microtype}      
\usepackage{xcolor}         
\usepackage{array}          %
\usepackage{svg}

\usepackage[noend]{algorithmic}
\usepackage{amsmath,amsthm,amssymb, bbm}
\usepackage{authblk}
\usepackage{blindtext}
\usepackage{bm}
\usepackage{booktabs} 
\usepackage{comment}
\usepackage{enumitem}
\usepackage{fullpage}
\usepackage{graphicx}
\usepackage{hyperref}
\usepackage{multirow}
\usepackage{multicol}
\usepackage[square, comma, numbers,sort&compress]{natbib}
\usepackage{subcaption}
\usepackage{tikz} 
\usepackage{url}
\usepackage{xcolor}

\usepackage[ruled]{algorithm2e}

\SetAlFnt{\small}
\SetAlCapFnt{\small}
\SetAlCapNameFnt{\small}
\SetAlCapHSkip{0pt}
\IncMargin{-\parindent}

\usepackage{xcolor}

\usepackage{comment}
\usepackage{xspace}
\usepackage{sansmath}
\usepackage{amsmath,amsthm} 
\usepackage{mathtools}
\usepackage{amssymb} 
\usepackage{enumitem}
\usepackage[noabbrev,nameinlink]{cleveref}
\usepackage{placeins}
\usepackage{multicol}

\newcommand{\RR}{\mathbb{R}} 
\newcommand{\EE}{\mathbb{E}} 
\newcommand{\abs}[1]{\left| #1 \right|}

\newcommand{\calG}{\ensuremath{\mathcal{G}}\xspace}
\newcommand{\expect}[1]{\ensuremath{\mathbb{E}\left[#1\right]\xspace}}

\newcommand{\xmax}{\ensuremath{x_{\textnormal{max}}}\xspace}

\newcommand{\paren}[1]{\ensuremath{\left(#1\right)}\xspace}
\newcommand{\card}[1]{\left\vert{#1}\right\vert}
\newcommand{\eps}{\ensuremath{\varepsilon}}
\newcommand{\Ctilde}{\ensuremath{\widetilde{C}}}
\newcommand{\bracket}[1]{\left[#1\right]}

\newcommand{\polylog}{\ensuremath{\,\textnormal{polylog}}\xspace}

\renewcommand{\uplus}{U^+}
\newcommand{\uminus}{U^-}
\newcommand{\cplus}{C^+}
\newcommand{\cminus}{C^-}
\newcommand{\opt}{\text{OPT}}
\newcommand{\fair}{\text{Fair-OPT}}
\newcommand{\thresh}{x^*}
\newcommand{\pof}{\text{PoF}}
\newcommand{\pfail}{\ensuremath{p_{\textnormal{fail}}}\xspace}
\newcommand{\calX}{\ensuremath{\mathcal{X}}\xspace}
\newcommand{\Dtilde}{\ensuremath{\widetilde{D}}\xspace}

\newcommand{\xmin}{\ensuremath{x_{\textnormal{min}}}\xspace}
\newcommand{\Otilde}{\ensuremath{\widetilde{O}}\xspace}

\usepackage{tcolorbox}

\newtheorem{proposition}{Proposition}

\newtheorem{lemma}{Lemma}
\newtheorem{observation}[lemma]{Observation}

\theoremstyle{definition}
\newtheorem{definition}{Definition}

\newtheorem{Assumption}{Assumption}
\newtheorem{remark}{Remark}

\crefname{property}{property}{Property}
\creflabelformat{property}{(#1)#2#3}

\crefname{equation}{eq}{Eq}
\creflabelformat{equation}{(#1)#2#3}

\crefname{thm}{theorem}{Theorem}
\creflabelformat{Theorem}{(#1)#2#3}

\crefname{algocf}{Algorithm}{Algorithm}
\creflabelformat{algocf}{#1#2#3}

\newenvironment{tbox}{\begin{tcolorbox}[
		enlarge top by=5pt,
		enlarge bottom by=5pt,
		boxsep=0pt,
		left=4pt,
		right=4pt,
		top=10pt,
		arc=0pt,
		boxrule=1pt,toprule=1pt,
		colback=white
		]
	}
	{\end{tcolorbox}}

\usepackage{thmtools}
\usepackage{thm-restate}

\date{}
\title{Price of Fairness in Short-Term and Long-Term\\ Algorithmic Selections}
\author{Shahin Jabbari\thanks{Drexel University, shahin@drexel.edu}, Chen Wang\thanks{Rensselaer Polytechnic Institute, wangc33@rpi.edu} }

\begin{document}
\maketitle
\begin{abstract}
Algorithmic decision-making in high-stakes settings can have profound impacts on individuals and populations. While much prior work studies fairness in static settings, recent results show that enforcing static fairness constraints may exacerbate long-run disparities. Motivated by this tension, we study a stylized sequential selection problem in which a decision-maker repeatedly selects individuals, affecting both immediate utility and the population distribution over time. We introduce notions of group fairness for both the short and long term and theoretically analyze the trade-off between fairness and utility via the \emph{Price of Fairness} (PoF). We characterize optimal and fair policies in the short term and show that the PoF can be large even when group distributions are nearly identical. In contrast, we show that long-term disparities can vanish under simple investment policies that achieve a low PoF. We also empirically validate these theoretical observations using both synthetic and real datasets.
\end{abstract}

\section{Introduction}
\label{sec:intro}
Motivated by the societal impact of algorithmic decision-making, prior work has documented discriminatory behaviors and proposed mitigatory and fairness-imposing algorithms across classification~\cite{BerkHJKR18,BarocasHN19}, resource allocation~\cite{ElzaynJJKNRS19,DonahueK20,TsangWRTZ19}, and sequential decision-making~\cite{ReuelM24}. However, this literature largely focuses on static or short-horizon settings, emphasizing immediate impacts while abstracting away long-term consequences.

In their seminal work,~\citet{LiuDRSH18} showed that repeatedly enforcing static group-fairness notions from classification, such as demographic parity~\cite{CaldersV10} or equality of opportunity~\cite{HardtPS16}, can lead to stagnation and exacerbate group disparities when decisions influence the environment, sometimes even performing worse than simple utility maximization. Since then, various long-term fairness notions and algorithms—including those based on reinforcement learning—have been proposed to mitigate these effects~\cite{WenBT21,ChiSD+22}. However, existing results either establish strong impossibility or incompatibility results for jointly achieving fairness and high utility~\cite{JabbariJKMR17,DoroudiTB17}, or lack theoretical guarantees for optimal fair policies and thus offer no formal characterization of the fairness–utility tradeoff~\cite{WenBT21,HuZ22}.

\smallskip \noindent\textbf{Our Results and Contributions.} 
Motivated by this gap, we study a stylized selection problem in which a decision-maker (e.g., a financial institution) sequentially selects a set of individuals (e.g., to lend money) in each round. In our model, individuals belong to different groups and are represented by single-dimensional features (e.g., credit score). These decisions and their consequences (e.g., default or full payback) affect the decision-maker's utility and the distribution of individuals for future rounds. Given the potential disparity in the distributions, the decision-maker's selection policy can favor specific groups or individuals, raising concerns about unfairness and \emph{disparate treatment} or \emph{impact}. Furthermore, these disparities can worsen over time as population distribution evolves. We propose a \emph{new notion of group fairness}, both for the short and the long terms, requiring the group difference between the means of score distributions to be bounded.

Our primary goal is to understand the theoretical foundation of the trade-off between fairness violations and utility maximization in both the short and long terms. To this end, we define the notion of \emph{Price of Fairness} (PoF) for this domain, which captures the cost of utility due to fairness constraints. We characterize optimal and fair selection policies in the short term, and show that threshold-based policies can achieve optimal utilities. We then show that in the short term, the PoF can be arbitrarily high even when the groups have very similar initial distributions. 

The characterization of optimal and fair policies becomes significantly more complex in the long term, where current decisions interfere with future states of the population. In the single-step setting, fairness constraints often force a strict trade-off between utility and equity. However, we show that in the multi-step setting, this trade-off is largely an artifact of the short horizon. We analyze a class of \emph{long-term investment} policies, i.e., simple investment strategies that prioritize score improvement and prove that they guarantee a low PoF. We also empirically validate these theoretical observations using both synthetic data and real-world FICO credit score datasets.

\section{Related Work}
\label{sec:related}

\smallskip \noindent\textbf{Fairness in Sequential Decision-Making.}
Fairness in dynamic and repeated decision-making has been widely studied; see~\cite{ReuelM24}. Early work considers stationary bandit settings~\cite{LiuRDMP17,JosephKMR16}. In general MDPs,~\citet{JabbariJKMR17} formalize meritocratic fairness and show its incompatibility with utility maximization, a phenomenon also noted in~\cite{DoroudiTB17}. Subsequent work on fair reinforcement learning imposes population-level constraints on average reward or return~\cite{WenBT21,ChiSD+22,HuZ22}, emphasizing allocative equality. Instead, we focus on restorative justice and substantive equality (see Remarks~\ref{remark:substantive-equality}-\ref{remark-qualification}). Other related approaches include state-visit constraints~\cite{GhalmeNPZ22}, minimax fairness~\cite{EatonHK+25}, and welfare-based objectives~\cite{CousinsAL+24}; see also~\cite{SatijaLP+23,DengSW+23,ZhangTL+20,YinRLL23}. Our notion of fairness is inspired by state-based constraints~\cite{GhalmeNPZ22}, but we focus on characterizing optimal fair policies and analyzing the fairness–utility tradeoff. Unlike intervention-based approaches in dynamical systems~\cite{RateikeVF24}, we quantify the Price of Fairness in a fixed environment and show it vanishes under a class of \emph{investment} policies. Our work is distinct from fair division~\cite{SalemIN23,LiuH25,SinhaJBMH23}.

\smallskip\noindent\textbf{Fairness in Stylized Dynamic Settings.}
Fairness has also been studied in stylized dynamic models. \citet{ArunachaleswaranKRZ21} analyzes a multi-layer life-stage model with costly interventions in transition dynamics (see also~\cite{AcharyaAK0Z23}). Intergenerational dynamics driven by wealth and talent are studied in~\cite{HeidariK21}. In a continuous-time model,~\citet{MouzannarOS19} characterizes when affirmative action constraints improve long-run outcomes despite short-term costs. Simulation-based studies appear in~\cite{DAmourSABSH20}, while causal distribution shifts are modeled in~\cite{HuZ22}. Finally,~\citet{WuAHS25} identifies conditions under which Rawlsian policies outperform utilitarian policies in the long run.

\smallskip \noindent\textbf{Static Fairness Notions in Repeated Settings.}
\citet{LiuDRSH18}~and~\citet{ZhangTL+20} study a simple dynamic model and show that repeated application of myopic fair policies, such as demographic parity~\cite{CaldersV10} and equality of opportunity~\cite{HardtPS16}, can cause stagnation and can further exacerbate the disparities between different populations over harm. In contrast, the optimal policy that ignores fairness at least does not cause stagnation, though it cannot always rectify the disparities. While our stylized formulation is inspired by prior work such as~\cite{LiuDRSH18}, we aim to study a new notion of short- and long-term fairness (similar to works in fair reinforcement learning) instead of showing notions from classification can cause harm in dynamic settings. 

\smallskip \noindent\textbf{Constrained MDPs and Safe Reinforcement Learning.} 
Since most fairness definitions can be written as a constraint, our framework is closely connected to the literature on constrained MDPs and POMPDPs~\cite{Altman99,WenT18,AchiamHTA17}. However, these approaches primarily focus on constraints restricting a state-dependent cost function. In contrast, fairness constraints specifically aim to equalize statistics across different groups in some manner. There has also been significant recent interest from the reinforcement learning community to enforce \emph{safety} constraints. Most model-free approaches require access to simulators to generate samples from arbitrary state-action pairs~\cite{DingZBJ20}. Model-based approaches are generally less efficient in space and time complexity~\cite{LiuZKKT21,DingWYWJ21}. More recently,~\citet{CalvoFullanaPCR24,MullerAC+24} studied a setting with multiple reward functions and optimized the cumulative reward subject to a lower bound on the value of each reward function. These approaches study POMDPs/MDPs in full generality. Our goal is to gain a deeper theoretical understanding of the properties of fair interventions and their long-term effects through our stylized model. Recent studies investigate the trade-off between simple policies and optimality from a theoretical angle, but for simple MDPs~\cite{MansourMR22,SilvaCG22,ChengYX25}. Our work contributes to this emerging body of work.
\vspace{-2mm}
\section{Single-step Setting}
\label{sec:single-step}
\subsection{Model and Notation}
\label{sec:framework}
We start by presenting our stylized model of sequential decision-making, which mainly follows the model proposed in the seminal work of~\cite{LiuDRSH18}. We assume the information about each individual is aggregated using a one-dimensional score $x$ in a bounded range $X$ (e.g., $X$ can be the credit scores in lending). The score can be interpreted as an indication of an individual's perceived ability, with higher scores being more desirable.
We assume individuals belong to two groups: $g\in\{A, B\}$. We use $D_A$ and $D_B$ to denote the distributions over scores for each group, capturing the variation in scores within each group. We use $\mu_A$ and $\mu_B$ to denote the mean score for each group. Without loss of generality, we assume $\mu_A\geq \mu_B$ and, hence, group $A$ to have an advantage over group $B$. We use $w_g$ to denote the fraction of the population that belongs to group $g$, so $w_A+w_B=1$. 

We assume the decision-maker deploys a group-dependent policy $\pi: X\times g\to[0,1]$ to select individuals with a given score from each group, interpreted as the selection probability of an individual with a score $x$ within each group. We use $\pi_A$ and $\pi_B$ to denote the policies corresponding to each group, hence, $\pi=\{\pi_A, \pi_B\}$. When selected, each individual with a score $x$ has a group-agnostic probability of success denoted by $p: X\to [0,1]$. Similar to \cite{LiuDRSH18}, we make the following assumption on $p$, demonstrating that success probabilities are positively correlated with the scores. 
\begin{Assumption}[\cite{LiuDRSH18}]
\label{assumpt:p}
$p: X\to [0,1]$ is a monotonically increasing function of the score $x$.
\end{Assumption}

When an individual with score $x$ is selected, the decision-maker receives a fixed profit of $\uplus \geq 0$ if the individual succeeds (which happens with probability $p(x)$). Otherwise, the decision-maker incurs a cost of $\uminus < 0$ (which happens with probability $1-p(x)$). Thus, for a selected individual with score $x$, the expected \emph{utility} $u:X\to \RR$ of the decision-maker can be written as $\EE\left[u(x)\right] = p(x) \uplus + \left(1 - p(x)\right) \uminus$. The success of the individual can also impact their score. In particular, we assume, upon success, the score increases by $\cplus\geq 0$ (which happens with probability $p(x)$). Otherwise, the score decreases by $\cminus<0$ (which happens with probability $1-p(x)$).\footnote{Since the scores are bounded, the updated scores should be projected/clipped to guarantee that they fall inside of $X$.} Thus, the expected \emph{change} $\Delta: X\to\RR$ in score for an individual with score $x$ can be written as
$\EE[\Delta(x)] = p(x) \cplus + (1 - p(x)) \cminus.$ For individuals who are not selected, we assume the score to remain the same and the decision-maker to receive 0 utility.

By Assumption~\ref{assumpt:p}, both the expected utility and the expected score change are monotone functions of the score. Similar to~\cite{LiuDRSH18}, we assume the utility function for individuals is stricter than the score change function. 
\begin{Assumption}[\cite{LiuDRSH18}]
\label{assumpt:struc_assumption}
$\uplus/\uminus \geq \cplus/\cminus$.
\end{Assumption}
Assumption~\ref{assumpt:struc_assumption} implies that if $\EE[u(x)]\geq 0$ then $\EE[\Delta(x)]\geq 0$, which is an important property for the non-existence of extractive (exploitative) outcomes, where the decision-maker can profit from the \emph{long-term decrement} of the individual scores.
We study the relaxation of this assumption in Section~\ref{sec:exp}. See Definition~\ref{def:score-category} and Observation~\ref{obs:no-exploitative} for more details.

\begin{table*}[h]
\centering
\begin{tabular}{|l |l |}
\toprule
\textbf{Assumption} & \textbf{Justification} \\
\midrule
Assumption~\ref{assumpt:p} & Same as \cite{LiuDRSH18}, higher score individuals have higher chances to succeed. \\
\midrule
Assumption~\ref{assumpt:struc_assumption} & Same as \cite{LiuDRSH18}, the utility function for individuals is stricter than the score change function. \\
\bottomrule
\end{tabular}
\caption{Justification of assumptions in the single-step setting.\label{tab:assumptions-single}}
\end{table*}

An instance $I$ of the selection problem is determined by the description of $X, D_A, D_B, w_A, w_B, p, \uplus, \uminus, \cplus,$ and $\cminus$. The expected utility for any selection policy $\pi$ is defined as
\begin{align}
\label{eq:utility}
V(\pi, I) &= 
\sum_{g \in \{A, B\}} w_g \sum_{x \in X} \pi_g(x) D_g(x) \EE[u(x)].
\end{align}
The decision-maker's goal in the single-step setting is to compute an \emph{optimal} policy $\pi$ maximizing the total expected utility:
\begin{align}
\label{eq:opt}
\opt(I)&:=\max_{\pi=\{\pi_A, \pi_B\}, V(\pi, I) \geq 0} V(\pi, I),
\end{align}
where $\opt(I)$ denotes the utility of the optimal policy.

\subsection{Fairness Definition}

The groups start with different distributions of scores, and the decision-maker's policy can further exacerbate this difference, raising concerns about \emph{disparate impact} and unfairness of the selection policy. It is well known that static intervention policies such as equal selection rates (statistical parity) or equal selection rates conditioned on qualification status (equality of opportunity) can further worsen the discrepancies between the groups~\cite{LiuDRSH18,ZhangTL+20}. To alleviate this, inspired by~\cite{LiuDRSH18}, we define a fairness constraint on the decision-maker's policy requiring the difference between the means of the distributions for different groups after deploying the policy to be bounded by a parameter $\alpha \geq 0$. Formally, let $\delta = \mu_A - \mu_B \geq 0$ denote the mean score difference between the two groups,
capturing the initial disparity.  Also let 
\begin{align}
\mu'_g(\pi_g) 
&= \mu_g+\sum_{x \in g} \pi_g(x) D_g(x) \EE\left[\Delta(x)\right],
\end{align}
for $g\in \{A, B\}$ to denote the mean after deploying $\pi$.
Therefore, $\delta'(\pi)=\abs{\mu'_A(\pi_A) - \mu'_B(\pi_B)}$ represents the post-deployment difference in the means. We next propose our fairness definition in the single-step setting. 
\begin{definition}
\label{def:fairness}
A policy $\pi=\{\pi_A, \pi_B\}$ satisfies $\alpha$-fairness for some $\alpha \geq 0$ if $\delta'(\pi) \leq \alpha.$
\end{definition}

The parameter $\alpha$ controls the degree of mean disparity \emph{after} policy deployment, and smaller $\alpha$ values correspond to more restrictive policies. When it is clear from the context, we drop the dependency of $\mu'_g(\pi_g)$ on $\pi_g$ and simply write $\mu'_g$.

\begin{remark}
\label{remark:substantive-equality}
Our notion of fairness focuses on the states (scores) as opposed to outcomes (utilities)~\cite{ChiSD+22}, preventing negative feedback loops that~\cite{LiuDRSH18} have warned about. It also enforces restorative justice~\cite{Davis21} and substantive equality of opportunity~\cite{HeidariLGK19}, requiring the decision-maker to \emph{invest} in the group with a lower average score. See Section~\ref{sec:repeated}.
\end{remark}

\begin{remark}
\label{remark-qualification}
Given Assumption~\ref{assumpt:p} on the direct correlation of scores and qualification status, the closeness of average scores, indirectly, enforces the closeness of average qualification status between the groups. Similar to~\cite{LiuDRSH18}, we define fairness using the scores, i.e., the scores are changing, and the score-dependent qualification scores remain fixed. See~\cite{ZhangTL+20} that takes the alternative approach, where scores are fixed, and the qualification statuses are changing. 
\end{remark}

An \emph{optimal fair} policy, for instance $I$ in the single-step setting, is a policy $\pi$ that maximizes the decision-maker’s expected utility subject to satisfying the fairness constraint in Definition~\ref{def:fairness}. Formally, this can be written as follows:
\begin{equation}
\label{eq:opt-fair}
\fair(I) := \max_{\pi=\{\pi_A, \pi_B\}, V(\pi, I) \geq 0} V(\pi, I),
\text{s.t.  } \pi \text{ is } \alpha-\text{fair}.
\end{equation}
We use $\fair(I)$ to denote the utility of the optimal fair policy. In the multi-step setting, the decision-maker selects individuals repeatedly over rounds, causing the scores to change in each step. We discuss that setting in Section~\ref{sec:repeated}.
\subsection{Characterization of the Optimal Policies}
\label{sec:opt-fair-character}
To study the short-term effects of selection policy, in this section, we start by characterizing the optimal policy (Proposition~\ref{pro:opt}) and optimal fair policies (\Cref{thm:thresh_opt}) in the single-step setting. We first define the \emph{categories} of the individual scores based on the expected utility and score change. 

\begin{definition}
    \label{def:score-category}
        For any score $x\in X$, we say $x$ is in one of the following categories:
    \begin{itemize}
        \item \textbf{Profitable and Improving}: We say $x$ is in category $1$, denoted as $x\in C_1$, if $\expect{\Delta(x)}\geq 0$ and $\expect{u(x)}\geq 0$.
        \item \textbf{Extractive Candidates}: We say $x$ is in category $2$, denoted as $x\in C_2$, if $\expect{\Delta(x)}< 0$ and $\expect{u(x)}\geq 0$.
        \item \textbf{Investment Candidates}: We say $x$ is in category $3$, denoted as $x\in C_3$, if $\expect{\Delta(x)}\geq 0$ and $\expect{u(x)}<0$.
        \item \textbf{Unprofitable and Degrading}: We say $x$ is in category $4$, denoted as $x\in C_4$, if $\expect{\Delta(x)}< 0$ and $\expect{u(x)}< 0$.
    \end{itemize}
\end{definition}

Our first crucial observation is that under Assumption~\ref{assumpt:struc_assumption}, the exploitative outcomes do not exist, i.e., we do \emph{not} have category $2$ with $\expect{\Delta(x)}< 0$ and $\expect{u(x)}\geq 0$. See Appendix~\ref{sec:appendix-opt-fair-character}.
\begin{observation}
\label{obs:no-exploitative}
Under Assumption 2, the set of Extractive Candidates ($C_2$) is empty, i.e., $C_2 = \emptyset$.
\end{observation}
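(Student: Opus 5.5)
The plan is to prove the contrapositive-free direct implication: for every $x\in X$, $\expect{u(x)}\geq 0$ implies $\expect{\Delta(x)}\geq 0$. This directly rules out $C_2$, since membership in $C_2$ requires $\expect{u(x)}\geq 0$ together with $\expect{\Delta(x)}<0$. Fix $x$ and write $p=p(x)\in[0,1]$. Since $\uminus<0$ and $\cminus<0$, I will set $a=\abs{\uminus}>0$ and $c=\abs{\cminus}>0$, so that
\[
\expect{u(x)} = p\,\uplus-(1-p)\,a, \qquad \expect{\Delta(x)} = p\,\cplus-(1-p)\,c .
\]

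First I will restate Assumption~\ref{assumpt:struc_assumption} in terms of nonnegative quantities. The inequality $\uplus/\uminus\geq \cplus/\cminus$ reads $-\uplus/a\geq -\cplus/c$. Multiplying by $-ac<0$ flips it, giving the clean form
\[
c\,\uplus \leq a\,\cplus .
\]
Working with this cross-multiplied form avoids dividing by $\uplus$ or $\cplus$, which may be zero.

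Next I assume $\expect{u(x)}\geq 0$, that is, $p\,\uplus\geq (1-p)\,a$. I multiply this by $c>0$ and use the restated assumption:
\[
(1-p)\,a\,c \;\leq\; p\,c\,\uplus \;\leq\; p\,a\,\cplus .
\]
Dividing by $a>0$ then yields $(1-p)\,c\leq p\,\cplus$, which is exactly $\expect{\Delta(x)}\geq 0$. Hence no $x$ can lie in $C_2$.

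There is no real obstacle here. The only care needed is with signs when rearranging the ratio inequality, since both denominators are negative. One might also worry about the degenerate cases $p\in\{0,1\}$ and $\uplus=0$ or $\cplus=0$. The cross-multiplied form handles all of these uniformly, because it only ever divides by the strictly positive quantity $a$, with the multiplication by $-ac<0$ taken care of when restating the assumption. The clipping of scores to $X$ mentioned in the footnote does not enter, since the categories are defined through the unclipped expectations $\expect{\Delta(x)}$.
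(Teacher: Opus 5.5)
Your proof is correct and is essentially the paper's argument. The paper rewrites Assumption~\ref{assumpt:struc_assumption} as the threshold comparison $-\uminus/(\uplus-\uminus) \geq -\cminus/(\cplus-\cminus)$ and observes that no $p(x)$ can lie between the two thresholds; this is equivalent to your cross-multiplied form $c\,\uplus \le a\,\cplus$, which you use as a direct implication from $\expect{u(x)}\geq 0$ to $\expect{\Delta(x)}\geq 0$. You also correctly use the non-strict inequality as Assumption~\ref{assumpt:struc_assumption} is actually stated, whereas the paper's write-up invokes a strict version; the non-strict one suffices in either argument.
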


We want to argue that the optimal (utility-maximizing) policy has to be a threshold policy.
To this end, we start by formally defining threshold policies.
\begin{definition}[Threshold policy]
\label{def:threshold-policy}
A policy $\pi$ is a \emph{threshold policy} if there exists $\thresh\in X$ and $\omega\in[0,1]$ such that
\[
\pi(x) =
\begin{cases}
1, & \text{if } x > \thresh, \\
\omega, & \text{if } x = \thresh, \\
0, & \text{if } x < \thresh.
\end{cases}
\]
\end{definition}
In Definition~\ref{def:threshold-policy}, $\thresh$ represents the threshold score, and $\omega$ is the fraction of selected individuals at the threshold score. Threshold policies are desirable since they offer simplicity and interpretability~\cite{Corbett-DaviesG23}. Proposition~\ref{pro:opt} states that the optimal policy selects only individuals with positive expected utility, exactly characterized by category 1.
\begin{proposition}[\cite{LiuDRSH18}]
\label{pro:opt}
Under Assumption~\ref{assumpt:p}, the optimal policy in the single step is a group-agnostic threshold policy that selects all the individuals in category 1, i.e., individuals with score $x$ such that $p(x)\geq -\uminus/(\uplus-\uminus)$.
\end{proposition}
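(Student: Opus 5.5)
The plan is to exploit the fact that the objective in \eqref{eq:opt} is linear and fully separable in the decision variables $\{\pi_g(x)\}_{g,x}$, and that the only constraints are the box constraints $\pi_g(x)\in[0,1]$ together with $V(\pi,I)\geq 0$. First, I will rewrite $V(\pi,I)=\sum_{g}\sum_{x} \pi_g(x)\, c_g(x)$ with coefficients $c_g(x)=w_g D_g(x)\EE[u(x)]$. Since $w_g D_g(x)\geq 0$, the sign of $c_g(x)$ is the sign of $\EE[u(x)]$, which does not depend on the group. Maximizing coordinate-wise then gives $\pi_g(x)=1$ whenever $\EE[u(x)]>0$ and $\pi_g(x)=0$ whenever $\EE[u(x)]<0$. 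Scores with $\EE[u(x)]=0$, or with $w_gD_g(x)=0$, contribute nothing, so any value there is optimal; I will fix it to $1$, which is what ``selects all individuals in category 1'' requires. The resulting policy has $V\geq 0$, so the constraint $V(\pi,I)\geq 0$ is slack and does not affect the maximizer.

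Second, I will translate the condition $\EE[u(x)]\geq 0$ into a condition on $p(x)$. Expanding, $\EE[u(x)]=p(x)(\uplus-\uminus)+\uminus\geq 0$. Since $\uplus\geq 0>\uminus$, we have $\uplus-\uminus>0$, and the condition is equivalent to $p(x)\geq -\uminus/(\uplus-\uminus)$. By Observation~\ref{obs:no-exploitative}, every $x$ with $\EE[u(x)]\geq 0$ also has $\EE[\Delta(x)]\geq 0$. Hence the set $\{x:\EE[u(x)]\geq 0\}$ is exactly $C_1$, which gives the ``category 1'' description. (This step uses Assumption~\ref{assumpt:struc_assumption}; if one wants to rely only on Assumption~\ref{assumpt:p}, one can phrase the selected set simply as the profitable scores.)

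Third, I will use Assumption~\ref{assumpt:p} to show that this set is an upper set of $X$. Because $p$ is monotonically increasing, if $p(x)\geq -\uminus/(\uplus-\uminus)$ and $x'>x$, then $p(x')\geq p(x)$, so $x'$ is also selected. I will therefore let $\thresh$ be the minimum selected score (it exists because $X$ is bounded and, as implicitly in \eqref{eq:utility}, discrete) and take $\omega=1$. Every $x>\thresh$ is then selected, so the policy is a threshold policy in the sense of Definition~\ref{def:threshold-policy}. The same $\thresh$ works for both groups because the selection condition involves neither $g$ nor $D_g$, so the policy is group-agnostic. If no score is profitable, the all-zero policy is optimal, and I will treat it as the degenerate threshold policy with $\thresh=\max X$ and $\omega=0$.

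The main obstacle is bookkeeping rather than depth: handling ties and non-uniqueness. The optimizer is unique only up to scores with $\EE[u(x)]=0$ or with zero mass, so the statement should be read as asserting that the threshold policy described is optimal. Relatedly, if $p$ is only weakly increasing, the threshold form with $\omega\in[0,1]$ absorbs the indifference at the boundary.
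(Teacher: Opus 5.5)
Your proposal is correct and follows the same route as the paper's proof: select exactly the scores with nonnegative expected utility (identified with $C_1$ via Observation~\ref{obs:no-exploitative}), and use monotonicity of $p$ to obtain the group-agnostic threshold $p(x)\geq -\uminus/(\uplus-\uminus)$. Your version is simply more careful than the paper's two-line argument. It makes explicit several points the paper leaves implicit: the separable, coordinatewise maximization; the slackness of the constraint $V(\pi,I)\geq 0$; the handling of ties and zero-mass scores; and the fact that identifying the selected set with category~1 requires Assumption~\ref{assumpt:struc_assumption}, not just Assumption~\ref{assumpt:p}.
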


We next show that optimal fair policies, when they exist, are also thresholding policies. However, unlike optimal policies, the threshold can be different for each group.
\begin{restatable}{theorem}{TheoremThresholdOPT}
\label{thm:thresh_opt}
Suppose the optimization problem in Equation~\ref{eq:opt-fair} has a feasible solution. Under Assumptions~\ref{assumpt:p} and \ref{assumpt:struc_assumption}, in the single-step setting, there exists an optimal fair policy that is a threshold policy for each group.
\end{restatable}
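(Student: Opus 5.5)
Starting from an arbitrary optimal fair policy $\pi^\star=\{\pi^\star_A,\pi^\star_B\}$ (one exists by feasibility plus compactness: the feasible set of $\pi$ is a compact polytope in $[0,1]^{|X|\times 2}$ and $V$ is linear), we replace each group's component by a threshold policy. The replacement must not decrease utility and must leave the post-deployment mean of that group unchanged. Since the fairness constraint $\delta'(\pi)\le\alpha$ depends on $\pi_g$ only through $\mu'_g(\pi_g)$, such a replacement keeps $\alpha$-fairness and feasibility, and hence yields an optimal fair policy that is thresholded in each group.

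Fix a group $g$ and let $m=\sum_x \pi^\star_g(x)D_g(x)\EE[\Delta(x)]$ be its mean shift under $\pi^\star_g$. Consider the per-group linear program of maximizing $\sum_x \pi_g(x)D_g(x)\EE[u(x)]$ over $\pi_g\in[0,1]^X$ subject to $\sum_x \pi_g(x)D_g(x)\EE[\Delta(x)]=m$. Here we use the category structure. By Assumption~\ref{assumpt:p}, both $\EE[u(x)]$ and $\EE[\Delta(x)]$ are nondecreasing in $x$. By Observation~\ref{obs:no-exploitative}, $C_2=\emptyset$, so $X$ splits into an increasing sequence of intervals $C_4<C_3<C_1$.

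The argument then proceeds by an exchange step, split by the sign of $m$.
\begin{itemize}
\item \emph{Case $m\ge 0$.} We show that selecting in $C_4$ is never needed. Removing mass from $C_4$ raises both utility and the shift, and the surplus shift can then be absorbed by lowering selection of the lowest selected scores in $C_3$ (or $C_1$). Both removals have $\EE[u]<0$ there, or, if we must cut into $C_1$, the monotone ratio argument below applies.
\item \emph{Case $m<0$.} The group must take some $C_4$ mass. Again we argue via exchanges.
\end{itemize}
The core exchange is as follows. Suppose $y<z$ with $\pi_g(y)>0$ and $\pi_g(z)<1$. Shift mass from $y$ to $z$ and rescale so that the mean shift is preserved. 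This means adding $\epsilon$ at $z$ and removing $\epsilon D_g(z)\EE[\Delta(z)]/(D_g(y)\EE[\Delta(y)])$ at $y$. We need to check that utility weakly increases. Shifting upward increases both $\EE[u]$ and $\EE[\Delta]$, so to hold the shift fixed we instead move mass from $y$ up to $z$ and trim elsewhere. I expect the cleanest route is to show directly that a maximizer of the LP can be chosen to be a threshold: the basic solutions of an LP with box constraints and one equality have at most one fractional coordinate. The remaining task is to show that the selected set is upward closed.

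The main obstacle is this upward-closedness. The two monotone functions are linked by the affine relation coming from $p$: both $\EE[u(x)]=\uminus+p(x)(\uplus-\uminus)$ and $\EE[\Delta(x)]=\cminus+p(x)(\cplus-\cminus)$ are affine in $p(x)$. So $\EE[u]=a+b\,\EE[\Delta]$ with $b=(\uplus-\uminus)/(\cplus-\cminus)>0$, and hence $V_g=a\cdot s+b\cdot m$, where $s=\sum_x\pi_g(x)D_g(x)$ is the selected mass. For fixed $m$, the sign of $a$ then decides the optimum. When $a<0$ we minimize the selected mass $s$ subject to shift $m$, which is achieved by selecting the highest-$\EE[\Delta]$ scores, i.e.\ a top threshold. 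When $a\ge0$ we maximize $s$. I would verify $a\le 0$ using Assumption~\ref{assumpt:struc_assumption}: $a=\uminus-b\cminus$, and the assumption should give the right sign. The other sign, and degenerate cases where $p$ is flat, need separate care via ties absorbed into $\omega$.
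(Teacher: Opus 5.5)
Your reduction is sound: fairness depends on $\pi_g$ only through $\mu'_g$, so each group's shift $m_g$ can be frozen. The identity $\EE[u(x)]=a+b\,\EE[\Delta(x)]$, i.e.\ $V_g=a\,s_g+b\,m_g$, is a cleaner handle than the paper's pairwise exchanges. Moreover $a\le 0$ always holds. Multiplying $U^+/U^-\ge C^+/C^-$ by $U^-C^->0$ gives $U^+C^-\ge C^+U^-$, so $a=(U^-C^+-C^-U^+)/(C^+-C^-)\le 0$, and there is no ``other sign'' to treat.

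For $m_g\ge 0$ your argument closes once you add one step. The top threshold of mass $s$ maximizes the shift among all policies of mass $s$. This maximal shift $G(s)$ is continuous with $G(0)=0$, so some top threshold of mass $s'\le s_g$ has shift exactly $m_g$, and $a\le 0$ makes it at least as good.

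The genuine gap is the case $m_g<0$, which you leave as ``again we argue via exchanges''. It arises when $B$ cannot be raised enough and fairness forces $\mu'_A<\mu_A$. There your claim that the least mass at fixed shift is attained by the highest-$\EE[\Delta]$ scores is false. A negative shift is produced most cheaply by the \emph{most negative} $\EE[\Delta]$, i.e.\ the lowest scores. Any mass on scores with $\EE[\Delta]>0$ must be offset by extra $C_4$ mass, and each extra unit costs $|a|$. The optimum is then a lower set, not an upper one. No exchange argument can repair this, because the statement is false in this regime.

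For example, take $X=\{0,\dots,100\}$, $p(x)=0$ for $x\le 40$ and $p(x)=1$ for $x\ge 50$, $U^+=1$, $U^-=-1.1$, $C^+=10$, $C^-=-10$ (Assumption~2 holds, $a=-0.05$), and $\alpha=6$. Group $A$ ($w_A=0.1$) has mass $0.6$ at $60$ and $0.4$ at $40$. Group $B$ ($w_B=0.9$) has mass $0.2$ at $50$ and $0.8$ at $40$, so $\mu_A=52$ and $\mu_B=42$. Write $\pi_1=\pi_A(60)$, $\pi_0=\pi_A(40)$, $\beta_1=\pi_B(50)$, $\beta_0=\pi_B(40)$. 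Fairness requires $L:=6\pi_1-4\pi_0-2\beta_1+8\beta_0\le -4$, and
\[
V=0.011\,L-0.006\,\pi_1+0.202\,\beta_1-0.88\,\beta_0\le 0.158,
\]
with equality only at $(\pi_1,\pi_0,\beta_1,\beta_0)=(0,\tfrac12,1,0)$. So the unique optimal fair policy (with $V>0$) has $\pi_A(60)=0$ and $\pi_A(40)=\tfrac12$. In fact no threshold policy for $A$ is even fair: $\pi_0=0$ gives $L\ge -2$, while $\pi_0>0$ forces $\pi_1=1$ and then $\pi_0\ge 2$. Perturbing $p$ to be strictly increasing changes nothing.

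The paper's proof has the same hole. In case (2b) it lowers $\pi_g(x_2)$ with $\EE[\Delta(x_2)]<0$ and asserts that $\mu'_g$ decreases. In fact $\mu'_g$ increases, which violates the tight constraint $\mu'_g=\mu'_{g'}+\alpha$. In case (2a) with both shifts negative (both scores in $C_4$), the utility change is $\epsilon_2 D_g(x_2)\,a\bigl(\EE[\Delta(x_2)]/\EE[\Delta(x_1)]-1\bigr)\le 0$, not positive.

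What your argument does prove is the theorem under a hypothesis that excludes $m_g<0$. One example is restricting to non-degrading policies, since all selected scores then have $\EE[\Delta]\ge 0$ and, taking the smallest $s'$, the replacing top threshold stays out of $C_4$.
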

This observation is also consistent with prior work~\cite{Corbett-DaviesG23}. See Appendix~\ref{sec:appendix-opt-fair-character} for proof. 
\subsection{Price of Fairness}
\label{sec:pof}
We next study the relationship between the expected utility of optimal and optimal fair policies in the single-step setting. We measure the cost of enforcing the fairness constraint as the fraction of expected utility retained by the optimal fair policy compared to the optimal policy in the worst case over all instances. In line with prior work studying fairness in different decision-making settings~\cite{Bertsimas11,MenonW18,ElzaynJJKNRS19,DonahueK20}, we call this notion \emph{price of fairness}.
\begin{definition}
\label{def:pof}
The \textit{Price of Fairness} (PoF) is defined as:
\begin{equation}
\label{eq:pof}
\pof = \max_{I}\left( 1 - \frac{\fair(I)}{\opt(I)}\right).
\end{equation}
\end{definition}
The PoF is in $[0,1]$, and higher PoF values mean that enforcing fairness requires a higher degree of utility degradation. Hence, lower values of the price of fairness are more desirable. In~\Cref{pro:lb-pof}, an $\alpha$-dependent lower bound on PoF indicates that in the worst-case, the PoF can get arbitrarily close to 1 in the single-step setting when $\alpha$ goes to 0.
\begin{restatable}{theorem}{ThmPoFGeneral}
\label{pro:lb-pof}
Suppose $X=[0,1]$. Then, in the single-step setting, $\pof \geq 1 - O(\alpha)$. 
\end{restatable}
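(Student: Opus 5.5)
The lower bound $\pof \ge 1-O(\alpha)$ requires only a single bad instance. Since $\pof$ is a maximum over instances, it suffices to exhibit, for each small $\alpha>0$, an instance $I_\alpha$ with $X=[0,1]$ and $\fair(I_\alpha)/\opt(I_\alpha) = O(\alpha)$.

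\textbf{The instance.} Take the groups to have (nearly) identical initial distributions, so $\delta\approx 0$. Choose $D_A$ and $D_B$ with essentially no mass on investment candidates ($C_3$), and make $\EE[\Delta(x)]$ large, of order a constant, on $C_1$. For concreteness:
\begin{itemize}
\item let $w_A$ be close to $1$ and $w_B$ small, or keep them balanced and make $D_B$ put almost all its mass on scores in $C_4$;
\item let $D_A$ put constant mass on high scores $x$ with $p(x)\approx 1$.
\end{itemize}
Under these choices, $\opt$, which by Proposition~\ref{pro:opt} selects all of $C_1$ in both groups, gains a constant utility $\Theta(1)$ from group $A$. It also raises $\mu'_A$ by a constant $c\gg\alpha$ while $\mu'_B$ barely moves. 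To keep the means close initially, place the mass in $D_B$ at a score near $\mu_A$ but just below the profitability threshold, and let $p$ jump there. Cleaner still, make $D_A$ and $D_B$ equal up to an $O(\alpha)$ shift, with group $B$'s mass in a region where $\EE[\Delta]<0$ and $\EE[u]<0$.

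\textbf{Bounding $\fair$.} By Theorem~\ref{thm:thresh_opt} we may restrict to threshold policies per group. The key point is that $\mu'_B\le \mu_B+O(\text{mass of }B\text{ in }C_1\cup C_3)$, which is $\approx\mu_B$ by construction. Hence $\alpha$-fairness forces $\mu'_A-\mu_A\le \alpha+\mu_B-\mu_A+o(1)=O(\alpha)$. Selecting group-$A$ individuals in $C_1$ (group $B$ has nothing in $C_1$) yields utility and score gain in a fixed ratio: set $\EE[u(x)]/\EE[\Delta(x)]$ to be a constant $\kappa$ on the support. Then the selected utility is at most $\kappa\cdot O(\alpha)$. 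Selecting $C_3$ or $C_4$ individuals only costs utility. Selecting $C_4$ individuals of group $A$ would lower $\mu'_A$, but at negative utility, so it cannot beat the bound once we also bound the net. I will use Observation~\ref{obs:no-exploitative}, which says $C_2=\emptyset$, to rule out gaining utility while lowering $\mu'_A$. Altogether $\fair(I_\alpha)=O(\alpha)$, while $\opt(I_\alpha)=\Theta(1)$, giving the ratio $O(\alpha)$.

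\textbf{Main obstacle.} The crux is the last step: showing that no fair policy can ``buy'' fairness cheaply. The two escape routes are investing in group $B$ (selecting its $C_3$ scores to raise $\mu'_B$) and degrading group $A$ via $C_4$ selections. I will close both by the construction:
\begin{itemize}
\item give $D_B$ no mass in $C_3$, or only mass whose $\EE[\Delta]$ is tiny relative to its utility loss;
\item ensure any $C_4$ selection in group $A$ costs at least as much utility as the $C_1$ utility it would unlock. This can be arranged by making $|\uminus|$ large relative to $\uplus$, consistent with Assumption~\ref{assumpt:struc_assumption}.
\end{itemize}
Then a linear-programming or exchange argument on the per-unit ratios $\EE[u]/\EE[\Delta]$ shows that the fair optimum's utility is at most $\kappa(\alpha+\delta)=O(\alpha)$.
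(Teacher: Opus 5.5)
Your argument is correct and follows the same template as the paper. Both groups are (essentially) point masses on either side of a jump in $p$, so the initial gap is at most $\alpha$; $\opt$ selects all of group $A$; and the fairness constraint caps $A$'s selection rate at $O(\alpha)$.

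The one substantive difference is where you place group $B$, and your choice is the better one. The paper puts $D_B$'s atom exactly at the profitability threshold $p(x')=-\uminus/(\uplus-\uminus)$. There $\EE[u(x')]=0$ but $\EE[\Delta(x')]=(\cminus\uplus-\uminus\cplus)/(\uplus-\uminus)$, which equals $1/21$ for the paper's parameters. A fair policy can therefore set $\pi_B(x')=1$ at no cost, and then it only needs $\pi_A(x)\le\alpha-\eps+1/21$. The paper's bound $\pi_A(x)\le\alpha-\eps$ overlooks this route. That route keeps $\fair/\opt$ bounded away from $0$ as $\alpha\to0$, unless the parameters are retuned so that $\EE[\Delta(x')]=O(\alpha)$. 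Putting $B$ entirely in $C_4$, as you do, closes this ``invest in $B$'' route outright.

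To turn your sketch into a proof, commit to one concrete instance:
\begin{itemize}
\item $D_A$ is an atom at $x_0$ with $p(x_0)=1$;
\item $D_B$ is an atom at $x_0-\eta$ with $p(x_0-\eta)=0$, where $0<\eta\le\alpha$ (needed for feasibility);
\item $x_0+\cplus\le 1$.
\end{itemize}
Then $\mu'_A-\mu'_B=\eta+\pi_A\cplus-\pi_B\cminus$. Since $\cminus<0$, fairness forces $\pi_A\le(\alpha-\eta)/\cplus$, hence $\fair/\opt\le\alpha/\cplus$.

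Drop the hedged variants. Shrinking $w_B$ does nothing for the constraint, because $\mu'_g$ is not weighted by $w_g$. An atom merely ``just below the profitability threshold'' lies in $C_3$, where investing in $B$ is cheap, unless $p$ there is below $-\cminus/(\cplus-\cminus)$.

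Finally, the route of offsetting $A$'s gain through its own $C_4$ mass is vacuous for point masses. In general it is closed by Assumption~\ref{assumpt:struc_assumption} itself, not by Observation~\ref{obs:no-exploitative}. On $C_1$, the utility gained per unit of score increase is at most $\uplus/\cplus$, attained at $p=1$. On $C_4$, the utility lost per unit of score decrease is at least $\uminus/\cminus$, attained at $p=0$. The inequality $\uplus/\cplus\le\uminus/\cminus$ is exactly Assumption~\ref{assumpt:struc_assumption}, so no extra tuning of $|\uminus|$ is needed.
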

The proof relies on creating a family of instances $I$ and a characterization of the optimal and optimal fair policies for this family. We defer the proof to Appendix~\ref{sec:appendix-opt-fair-character}. To show a large PoF, the proof relies on constructing an instance in which the population distributions are very different (i.e., the total variation distance of the two distributions is 1). This raises a natural question: can high PoF be avoided if the distributions are more similar (e.g., have a smaller total variation distance)? Before answering this question, we require the following definition.

\begin{definition}
\label{def:degrading}
A policy $\pi$ is \emph{non-degrading} if $\pi(x) = 0,\forall x\in C_4$, i.e., no allocation to individuals in $C_4$ category.
\end{definition}
Recall that an individual $x$ belongs to category 4 if $\EE[u(x)] < 0$, $\EE[\Delta(x)] < 0$. A population-degrading policy selects individuals that not only impose negative expected utility but also degrade population scores. It is known that equality-enforcing fairness notions (such as our notion) can lead to population degradation~\cite{RahmattalabiJLV21,StoicaHC20}. 
Our next result shows that it is not possible to avoid high PoFs for all non-degrading policies.
\begin{restatable}{theorem}{ThmPoFNondegrade}
\label{thm:pof-tv}
For any $\epsilon > 0$, in the single-step setting, there exists an instance such that the total variation distance between $D_A$ and $D_B$ is at most $\epsilon$, but the PoF is $1 - O(\alpha)$. 
\end{restatable}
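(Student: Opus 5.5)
We prove the statement with an explicit family of instances. Both groups share most of their mass, and a small $\epsilon$-fraction of group $A$ sits at a high, profitable score. The difference in means is then driven entirely by that small piece of mass. The key lever is that $\alpha$-fairness must control the \emph{post-deployment} gap $\delta'(\pi)$, which can be made much more sensitive than the total variation distance.

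\textbf{Construction.} Take $X=[0,1]$ and $w_A=w_B=1/2$. Let $D_B$ put all its mass at a low score $x_\ell$, and let $D_A$ put mass $1-\epsilon$ at $x_\ell$ and mass $\epsilon$ at a high score $x_h$. Then $\mathrm{TV}(D_A,D_B)=\epsilon$ and $\delta=\epsilon(x_h-x_\ell)$. Choose $p$ to be a step function consistent with Assumptions~\ref{assumpt:p} and~\ref{assumpt:struc_assumption}, and choose $\uplus,\uminus,\cplus,\cminus$ so that the two scores land in these categories:
\begin{itemize}
\item $x_h\in C_1$, with large utility $\EE[u(x_h)]=U$ and large score change $\EE[\Delta(x_h)]=\Delta_h>0$;
\item $x_\ell\in C_4$, i.e., $\EE[\Delta(x_\ell)]<0$ and $\EE[u(x_\ell)]<0$.
\end{itemize}
Since the statement concerns non-degrading policies, $x_\ell$ may not be selected, so no policy can raise $\mu_B$. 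The alternative placement $x_\ell\in C_3$ would let the fair policy invest in group $B$; we deliberately avoid it, because it would reduce the PoF.

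\textbf{Optimal policy.} By Proposition~\ref{pro:opt}, the optimal policy selects exactly $x_h$, so $\opt=\frac{1}{2}\epsilon U$.

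\textbf{Optimal fair policy.} By Theorem~\ref{thm:thresh_opt}, we may restrict to threshold policies. Group $B$ is unselected, so $\mu'_B=\mu_B$. Selecting a fraction $\omega$ of the mass at $x_h$ gives
\[
\delta'=\epsilon(x_h-x_\ell)+\omega\,\epsilon\,\Delta_h \le \alpha .
\]
Hence the best feasible choice is $\omega^*=\bigl(\alpha-\epsilon(x_h-x_\ell)\bigr)/(\epsilon\Delta_h)$, and the ratio is $\fair/\opt=\omega^*$.

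\textbf{Calibrating parameters.} First, make the initial gap negligible relative to $\alpha$, say $\epsilon(x_h-x_\ell)\le \alpha/2$. Second, make the post-selection jump dominate, $\epsilon\Delta_h\ge c$ for a constant $c$. This gives $\omega^*\le \alpha/c=O(\alpha)$, and therefore $\pof\ge 1-O(\alpha)$.

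\textbf{Main obstacle.} Scores are clipped to $[0,1]$, so $\Delta_h\le 1$. We need $\epsilon\Delta_h$ to be of constant order while $\epsilon$ itself is small, and under clipping this fails. I would handle this in one of two ways:
\begin{itemize}
\item Allow $X$ to be a general bounded range (the statement does not fix $X=[0,1]$). Then scale $\cplus$ like $1/\epsilon$ and place $x_h$ far below the upper end of $X$, so the increment is not clipped.
\item Alternatively, keep the range fixed and use the fact that the bound is stated as $1-O(\alpha)$ with constants allowed to depend on the instance family. Then choose $\alpha\ll\epsilon$, so that $\omega^*\le \alpha/(\epsilon\Delta_h)$ is still $O(\alpha)$ for fixed $\epsilon$.
\end{itemize}
Finally, I would check that Assumption~\ref{assumpt:struc_assumption} and the sign constraints at $x_\ell$ can hold simultaneously. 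With $p(x_\ell)$ small and $p(x_h)$ near $1$, this is straightforward.
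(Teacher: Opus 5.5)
Your argument is correct and is essentially the paper's approach. Both constructions use a shared $(1-\epsilon)$-mass at a $C_4$ score that non-degrading policies cannot touch, plus an $\epsilon$-mass of group $A$ in $C_1$, which reduces everything to the one-variable calculation of Theorem~\ref{pro:lb-pof}; the paper's extra $\epsilon$-mass for $B$ at a $C_3$ score with $p=1/2$ and $\cplus=-\cminus$ has zero expected score change, so dropping it is harmless, and your appeal to Theorem~\ref{thm:thresh_opt} is unnecessary since $\pi_A(x_h)$ is the only free variable. Your explicit tracking of the $1/\epsilon$ factor is more careful than the paper's claim of ``the same bound'': in the paper's instance the constraint reads $\epsilon(x-x')+\epsilon\,\pi_A(x)\le\alpha$, so it only yields $\pof\ge 1-\alpha/\epsilon+(x-x')$, which is exactly your second option with an $\epsilon$-dependent constant.
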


See Appendix~\ref{sec:appendix-opt-fair-character} for proof. 
Finally, in Proposition~\ref {pro:pof-sufficient-condition} (Appendix~\ref{sec:appendix-opt-fair-character}), we provide sufficient conditions where more favorable PoFs are achievable.
These results indicate that for the single-step setting, in general, the PoF can be quite large.
\newcommand{\gsuccess}{\ensuremath{g_{\textnormal{success}}}\xspace}

\section{Multi-step Setting}
\label{sec:repeated}
We now focus on the long-term effects of the selection policy by studying the multi-step setting. 
In this setting, the decision-maker makes selections sequentially for $T$ rounds. The decisions in the previous steps $1, \ldots, t-1$ will affect the distribution at time step $t$. 
The decision-maker likes to maximize the total utility over all time steps. 
We show that, unlike in the single-step setting, where the PoF could be high, we can obtain a low PoF using simple investment policies.

\subsection{Model and Notation}
\label{subsec:multi-step-model-and-notation}
For simplicity, we assume the scores are integers from 0 to $\xmax$. We work extensively with the \emph{categories} of the scores as defined in Definition~\ref{def:score-category}. 
For convenience, we use $g\in \{A, B\}$ and $c\in \{1,3,4\}$ to refer to individuals in each group and category, respectively. 
For example, we use $A\cap C_1$ to denote individuals from group $A$ that belong to category $C_1$. 
For each category $c$ and group $g$, we define $\Gamma^{c}_g$ and $\mu^{c}_{g}$ as the total and average scores of the specific category $c$ and group $g$, i.e., 
\begin{align*} 
\Gamma^{c}_{g}= \sum_{x \in g \cap c} x; \qquad \mu^{c}_{g}=\frac{1}{\card{g \cap c}}\cdot \Gamma^{c}_{g}. 
\end{align*}

Let $T$ be the total number of time steps. We use $D^{(t)}_g$ to denote the distribution of the scores for group $g\in \{A,B\}$ at step $t$. The multi-step allocation model is a stochastic process, where the decision maker selects a sequence of policies $\Pi_{T} =\{\pi^{(t)}\}_{t=1}^{T}$. 
We also track the change of scores during the stochastic process: let $x$ be any score at time step $1$, the notation $G(x)^{t}$ is defined as the \emph{set of scores} after evolving for $t$ steps\footnote{More formally, $G(x)^{t}$ is a realization of the random variable denoting the set of score. See Appendix~\ref{subsec:detailed-multi-step-objective} for a rigorous definition.}.
The multi-step utility can be written as follows.

\begin{equation}
\label{equ:multi-step-utility-simplified}
\begin{split}
& \max_{\Pi_{T}} \expect{\sum_{t=1}^{T} \sum_{g \in {A,B}} w_g \sum_{x \in X} D^{(t)}_g(x) \pi^{(t)}_g(x) u(x)}\\
& =\max_{\Pi_{T}}\sum_{t=1}^{T} \sum_{g \in {A,B}} w_g \sum_{x \in X} D^{(t)}_g(x) \pi^{(t)}_g(x) \expect{u(x)},
\end{split}
\end{equation}
where the equality holds due to the linearity of expectation. However, note that the random variables in different steps are \emph{not} independent since the past realization will affect $\pi^{t}$ and $D^{t}$ (see Appendix~\ref{subsec:detailed-multi-step-objective} for more insight about the dependency).

To define a multi-step analog of Definition~\ref{def:fairness} for fairness, we could similarly define $\mu'_{g}(\Pi_{t})$ as the expected average score for group $g\in \{A,B\}$ for the $t$-th time step, and $\delta'(\Pi_{t},t)=|\mu'_{A}(\Pi_{t})-\mu'_{B}(\Pi_{t})|$. 

We restrict our attention to rational (non-degrading) policies and introduce the multi-step version of the non-degrading policy, which generalizes~Definition~\ref{def:degrading}. 
\begin{Assumption}
\label{assump:no-C4-in-optimal-policy}
A policy $\pi$ is \emph{rational} (\emph{non-degrading}) in the multi-step setting, if for all $t\in[T]$, $g\in\{A,B\}$ and $x\in C_4$, $\pi^{(t)}_g(x)=0$. 
\end{Assumption}

Assumption~\ref{assump:no-C4-in-optimal-policy} says that no policy will allocate resources to an individual expected to be unprofitable in \emph{both the short and long terms}, implying that the decision-maker is \emph{rational}. 
Our next assumption characterizes the advantage group $A$ over $B$. 
\begin{Assumption}
\label{assump:real-advantage}
    We assume that the scores for group $A$ in categories $C_1$ and $C_3$ are greater than the scores from the same categories for group $B$ by a positive value $\beta > 0$, i.e., 
    \begin{align*}
    \frac{\Gamma_{A}^{C_1}+ \Gamma_{A}^{C_3}}{\card{A}} - \frac{\Gamma_{B}^{C_1}+ \Gamma_{B}^{C_3}}{\card{B}} > \beta.
    \end{align*}  
\end{Assumption}

\begin{table*}[h]
\centering
\begin{tabular}{|l | p{2.5cm} | p{12cm}|}
\toprule
\textbf{Assumption} & \textbf{Setting} & \textbf{Justification} \\
\midrule
Assumption~\ref{assump:no-C4-in-optimal-policy} & Single \& Multi-opportunity & The decision-maker is rational and avoids individuals expected to yield net utility losses in both the short and long term. \\
\midrule
Assumption~\ref{assump:real-advantage} & Single \& Multi-opportunity & Ensures Group A's advantage stems from potentially profitable candidates rather than simply having fewer unprofitable ones. \\
\midrule
Assumption~\ref{assump:low-failure-prob-C1-C3} & Single-opportunity & Models ``stability'' where profitable candidates are trustworthy, effectively controlling the variance of the score trajectory. \\
\midrule
Assumption~\ref{assump:exponential-decrease-of-fail-prob} & Single-opportunity & Aligns with natural distributions (e.g., logistic) where success probability improves rapidly in low-to-mid score ranges. \\
\midrule
Assumption~\ref{assump:change-of-score} & Multi-opportunity & Reflects real-world scoring systems (e.g., FICO) where scores are treated as discrete integers and updates occur in integer increments. \\
\bottomrule
\end{tabular}
\caption{Justification of assumptions in the multi-step setting.\label{tab:assumptions}}
\end{table*}

Assumption~\ref{assump:real-advantage} states that the advantage of group $A$ is \emph{not} due to a strange scenario where the group has no ``real advantage'' among the individuals who could get allocations, and it simply has a smaller number of individuals with very bad scores.

We consider two settings for multi-step allocation: the single-opportunity setting, where the decision-maker adjusts scores based on a single outcome, and the multi-opportunity setting, where the decision-maker modifies scores only after the outcomes of multiple independent decisions. Table~\ref{tab:assumptions} provides a summary of the assumptions used in this section.

\smallskip \noindent\textbf{Technical Overview.} 
At a high level, the reason we can obtain low PoF is due to the \emph{positive drift} for individuals in categories $C_1$ and $C_3$ and the existence of the $\xmax$ score.
If we only look at the \emph{expectation}, the intuition for low PoF is straightforward and clean: for any score $x\in C_1 \cup C_3$, we have $\expect{\Delta(x)}>0$ then, when $T\rightarrow \infty$, the scores for both advantaged and disadvantaged groups are \emph{expected} to hit $\xmax$.
Therefore, unless the gap between the groups is induced by $C_4$ (which is a very strange situation since the advantaged group has no ``real advantage''), the gap between the mean scores should decrease. 
Furthermore, for the utility, the decision-maker only pays the ``burn-in'' part for individuals in $C_1 \cup C_3$ to reach $\xmax$; and afterwards, the decision-maker is expected to get the maximum possible utility. This summarizes the intuitive reason for the PoF to be low.

The actual analysis, however, is much more challenging and complicated. The challenge lies in the \emph{expectation vs. realization}. 
Since multi-step allocation is a stochastic process, the \emph{realization of the randomness} in the previous steps interferes with the current and future steps.
An adversarial case can be as follows: in one step, many scores in $C_3$ drop to $C_4$, and the high expected score and utility can never materialize.

We overcome the challenge in two ways: $i)$ for the single-opportunity setting, we introduce additional realistic assumptions to bound the \emph{variance} of the scores; and $ii)$ in the multi-opportunity setting, we directly leverage the multiple trials to introduce concentration.
The analysis in the single-opportunity setting is perhaps more intellectually stimulating. 
Here, we only assume the failure probability for individuals in $C_1 \cup C_3$ is bounded, and the success probability increases fast in the low-to-mid score ranges--both are considered common in practice.
We explicitly track the change of scores by bounding the probability of ``failure to increase'', and we prove that the set of scores that cannot increase forms a \emph{geometrically decreasing series} with high probability. 
Hence, we can ``peel off'' this small fraction of ``bad scores'', and use the scores that kept increasing to argue both improved fairness and low PoFs.

\subsection{Price of Fairness}
\smallskip \noindent\textbf{Single-opportunity Setting.}
\label{subsec:multi-step-single-opportunity}
In the \emph{single-opportunity} setting, the decision-maker adjusts scores \emph{every time} after an individual succeeds or fails. 
In this setting, with specific but natural assumptions, we can achieve fairness low PoF as $T$ increases.
We first introduce the following assumptions used \emph{exclusively for the single-opportunity setting}. 
\begin{Assumption}
\label{assump:low-failure-prob-C1-C3}
     For any $x\in C_1 \cup C_3$, we assume the probability of failure is at most $\frac{\beta}{N\cdot \xmax}$ for sufficiently large constant $N$, i.e., $1-p(x)\leq \frac{\beta}{N\cdot \xmax}$.
     Here, $\beta$ is the parameter in Assumption~\ref{assump:real-advantage}, and $\xmax$ is the maximum score.
\end{Assumption}
Furthermore, in the single-opportunity setting, we control the variance of the realization of the score by adding another assumption on the \emph{failure probability}.
\begin{Assumption}
\label{assump:exponential-decrease-of-fail-prob}
We assume for any score $x\in \calX$, $1-p(x+C^+) \leq \frac{1}{3}\cdot \paren{1-p(x)}$. 
Furthermore, we assume the individuals with the maximum score always succeed, i.e., $p(\xmax)=1$. 
\end{Assumption}
In practice, Assumption~\ref{assump:exponential-decrease-of-fail-prob} holds when the probability of success increases drastically during the low-to-mid score ranges.
One natural family of distributions in this regime is \emph{logistic distributions}, and we provide a justification in Appendix~\ref{subsec:assump-justification}.
We show a structural lemma that, if we keep selecting individuals in $C_1 \cup C_3$, we will have a good probability to receive a ``cascading'' effect for a majority of such individuals. This is stated in Lemma~\ref{lem:multi-step-variance-control} (proof in Appendix~\ref{sec:appendix-repeated}). Recall that $G(x)^{t}$ is the set of scores starting from $x$ and evolving till $t$-th step. 

\begin{lemma}
    \label{lem:multi-step-variance-control}
    Suppose for any $x\in C_1 \cup C_3$, the success probability $p(x)$ satisfies $p(x)\geq 1-\pfail$.
    Then, under Assumption~\ref{assump:exponential-decrease-of-fail-prob}, with probability at least $99/100$, for all but at most $O(\pfail)$ fraction of scores in $G(x)^{(t)}$, we have
    \begin{align*}
        \card{x^{(t)}-\expect{x^{(t)}}} \leq \sqrt{t}\cdot (C^+-C^-) \polylog(t \cdot \card{C_1 \cup C_3}).
    \end{align*}
    Therefore, when $t\rightarrow \infty$, we have 
    \[\card{\{x\in G(x)^{t}\mid x=\xmax\}}\geq (1-O(\pfail))\cdot \card{G(x)^{t}}.\]
\end{lemma}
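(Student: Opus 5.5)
We plan to analyze each individual trajectory as a Markov chain on the integer scores, driven by the policy that keeps selecting every individual in $C_1\cup C_3$. Fix a starting score $x\in C_1\cup C_3$ and follow one individual. Until it fails, each success raises its score by $C^+$. The first step is a \emph{cascading} bound. Write $q_k$ for the failure probability at the $k$-th consecutive step. By Assumption~\ref{assump:exponential-decrease-of-fail-prob} we have $q_k\leq 3^{-k}\pfail$, so a union bound gives
\[
\Pr[\text{ever fails before reaching }\xmax]\leq \sum_{k\geq 0}3^{-k}\pfail=\tfrac{3}{2}\pfail .
\]
Because $p(\xmax)=1$, an individual whose successes carry it to $\xmax$ stays there forever. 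So each individual is absorbed at $\xmax$ with probability at least $1-O(\pfail)$. Its trajectory is then deterministic, namely $x+kC^+$ clipped at $\xmax$. This trajectory differs from its expectation by at most $O(\pfail)\cdot t\cdot(C^+-C^-)$. More carefully, the expectation is a mixture, and the deviation is controlled by the variance of the Bernoulli increments.

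The second step handles the failing individuals. We call these the ``bad'' scores, and we peel them off iteratively. A failed individual drops by $C^-$ and may leave $C_1\cup C_3$. If it does, the rational policy (Assumption~\ref{assump:no-C4-in-optimal-policy}) stops selecting it and it freezes. If it stays, the same cascading argument restarts from its new score, and it fails again with probability $O(\pfail)$. Hence the number of individuals that have failed at least $j$ times is dominated by a geometric series $O(\pfail)^j$ in expectation. The total fraction of scores not absorbed at $\xmax$ is then $O(\pfail)$ in expectation.

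The third step converts this expectation into a $99/100$ probability statement. We apply Markov's inequality to the number of bad individuals, which costs only a constant factor inside the $O(\pfail)$. For the scores that remain selected, the increments are bounded by $C^+-C^-$ in absolute value. For these, Azuma--Hoeffding on the martingale $x^{(s)}-\expect{x^{(s)}}$ gives a deviation of at most $\sqrt{t}(C^+-C^-)\sqrt{\log(1/\eta)}$ with failure probability $\eta$ for each trajectory and time. Setting $\eta=1/(100\,t\,\card{C_1\cup C_3})$ and union bounding over individuals and steps yields the $\polylog(t\cdot\card{C_1\cup C_3})$ factor. The limiting statement then follows: as $t\to\infty$, every good trajectory has reached $\xmax$ after at most $\xmax/C^+$ steps, and it is absorbed there.

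The main obstacle is the second step. Clipping at the boundaries and the dependence between an individual's past failures and its future selection make the process non-homogeneous. The restart argument must also not double count individuals that oscillate inside $C_1\cup C_3$. My first attempt would be a coupling in which each individual receives i.i.d.\ "fail-chains" of geometric length. I would then bound the number of failure epochs per individual by a geometric random variable with ratio $O(\pfail)$.
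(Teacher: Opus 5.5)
Your plan follows the paper's argument. Assumption~\ref{assump:exponential-decrease-of-fail-prob} makes the failure probability decay geometrically along a run of successes. Markov's inequality turns the resulting $O(\pfail)$ expected number of bad individuals into a $99/100$ statement. A tail bound with a union bound over individuals and steps gives the $\polylog(t\cdot\card{C_1\cup C_3})$ factor. Finally, $p(\xmax)=1$ makes $\xmax$ absorbing.

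The bookkeeping differs only slightly. The paper bounds the expected number of new failures at step $t$ by $(1/3)^{t-1}\pfail\card{\mathcal{X}^{(t)}_{\text{increase}}}$. It then applies Markov at each step with threshold $600(1/2)^{t-1}\pfail\card{\mathcal{X}^{(t)}_{\text{increase}}}$ and union-bounds over $t$. Your per-individual sum $\sum_k 3^{-k}\pfail=\tfrac32\pfail$, followed by one Markov bound on the number of individuals that ever fail, is the same estimate done more cleanly. Together with your absorption remark, it already gives the limiting claim.

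Your Step~2 is unnecessary. The paper's investment policy reselects only individuals who have always succeeded, so a failed individual is simply dropped; this is why the sets $\mathcal{X}^{(t)}_{\text{increase}}$ are nested. Even under your reading of the policy, you can count every individual that ever fails as bad, because the conclusion is only a lower bound on the mass at $\xmax$.

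The concentration step, however, is wrong as written. The process $x^{(s)}-\expect{x^{(s)}}$ is not a martingale. The Doob martingale $\expect{x^{(t)}\mid x^{(1)},\dots,x^{(s)}}$ does not have increments bounded by $C^+-C^-$: a single failure that sends an individual into $C_4$, where it is frozen, moves the conditional expectation by up to order $\xmax$. So Azuma--Hoeffding does not give the stated bound.

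The paper instead couples the increments of a surviving individual with i.i.d.\ variables equal to $C^+$ with probability $1-\pfail$ and $C^-$ with probability $\pfail$. The true increments dominate these, and Bernstein is applied to their sum. This gives only a one-sided lower bound on $x^{(t)}$, relative to the mean of that dominating walk, which is all that is needed to drive scores to $\xmax$.

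Relatedly, a surviving trajectory is deterministic, as you note. Its distance to the true mean $\expect{x^{(t)}}$ is therefore a fixed quantity, and it can be of order $\pfail\min\{tC^+,\xmax-x\}$, for instance when a first-step failure of probability $\pfail$ freezes the individual. No martingale argument will sharpen your Step~1 estimate to $\sqrt{t}$. The $\sqrt{t}$ bound has to be read in the paper's one-sided, dominating-walk sense.
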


We now introduce the main results for the multi-step single-opportunity setting. We argue that, unlike the single-step setting, for the multi-step setting, if the number of steps is sufficiently large, we could use the following simple long-term investment policy to simultaneously satisfy fairness and obtain low PoF. This is stated in~\Cref{prop:muilti-step-single-opportunity} (proof in Appendix~\ref{sec:appendix-repeated}).

\vspace{-6pt}
\begin{tbox}
\textbf{A simple long-term investment policy.}
\begin{itemize}
\item Select individuals in both $A$ and $B$ groups who have always succeeded in the previous round (with Assumption~\ref{assump:no-C4-in-optimal-policy}, this is a subset of individuals in $C_1$ and $C_3$).
\end{itemize}
\end{tbox}
\vspace{-8pt}

\begin{restatable}{theorem}{ThmMultiStepSingleOpportunity}
\label{prop:muilti-step-single-opportunity}
Assume Assumptions \ref{assump:no-C4-in-optimal-policy}, \ref{assump:real-advantage}, \ref{assump:low-failure-prob-C1-C3}, and \ref{assump:exponential-decrease-of-fail-prob} hold.
Then, with probability at least $99/100$, for sufficiently large $T$, the long-term investment policy guarantees $\delta'(T)< |\mu_A-\mu_B|-\beta/2$, where $|\mu_A-\mu_B|$ is the \emph{initial gap} of the mean scores of the groups, and $\beta$ is the parameter in Assumptions~\ref{assump:real-advantage} and \ref{assump:low-failure-prob-C1-C3}.

Furthermore, we have that $\lim_{T\rightarrow \infty} \pof =O(\pfail)$ as long as $\alpha\geq \max\{\frac{\Gamma^{C_4}_{A}}{\card{A}}, \frac{\Gamma^{C_4}_{B}}{\card{B}}\}+\beta$
\end{restatable}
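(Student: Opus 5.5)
The plan is to apply Lemma~\ref{lem:multi-step-variance-control} with $\pfail = \frac{\beta}{N\cdot\xmax}$, which Assumption~\ref{assump:low-failure-prob-C1-C3} makes legitimate. Under the long-term investment policy, every individual in $C_1\cup C_3$ is selected at the first step and keeps being selected as long as it has always succeeded. Assumption~\ref{assump:exponential-decrease-of-fail-prob} says the failure probability falls by a factor of at least $3$ after each success. So the probability that a given individual ever fails is at most $\pfail\sum_{k\ge0}3^{-k}=\tfrac{3}{2}\pfail$; this is the geometrically decreasing series from the technical overview. A union/Markov argument over the (finitely many) individuals in $C_1\cup C_3$ then gives, with probability at least $99/100$, that all but an $O(\pfail)$ fraction of them never fail. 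Each such individual moves up by $\cplus$ per step, so after at most $\lceil \xmax/\cplus\rceil$ steps it sits at $\xmax$, where $p(\xmax)=1$ keeps it there forever. The individuals that do fail are dropped by the policy and simply stay frozen at some score in $[0,\xmax]$. Individuals in $C_4$ are never selected, by Assumption~\ref{assump:no-C4-in-optimal-policy}, so their scores never move.

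\textbf{Step 1 (fairness).} For large $T$ I will write each group's mean as
\[
\mu'_g(T)=\frac{\Gamma^{C_4}_g}{\card{g}}+\frac{\card{g\cap(C_1\cup C_3)}}{\card{g}}\,\xmax-\eta_g ,
\]
where $\eta_g$ is the loss from failed individuals. Since at most an $O(\pfail)$ fraction failed and each is at most $\xmax$ below the top, $0\le\eta_g\le O(\pfail)\xmax=O(\beta/N)$.

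Subtracting the initial means, the $C_4$ contributions cancel. Group $g$'s mean therefore rose by $\frac{\card{g\cap(C_1\cup C_3)}\xmax-\Gamma^{C_1}_g-\Gamma^{C_3}_g}{\card{g}}-\eta_g$.

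For the gap, I will substitute $\Gamma^{C_1}_A+\Gamma^{C_3}_A$ using Assumption~\ref{assump:real-advantage} together with the trivial bound $\Gamma^{c}_g\le \card{g\cap c}\,\xmax$. This should show that $\mu'_A-\mu'_B$ is at most $\mu_A-\mu_B-\beta+O(\beta/N)$. For $N$ large this gives $\delta'(T)<|\mu_A-\mu_B|-\beta/2$.

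I also need to handle the absolute value, i.e.\ the case where $B$ overtakes $A$. That case is bounded by $\max_g\Gamma^{C_4}_g/\card{g}+O(\beta/N)$, which is exactly the quantity in the condition on $\alpha$.

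\textbf{Step 2 (PoF).} The same computation shows $\delta'(T)\le \max\{\frac{\Gamma^{C_4}_{A}}{\card{A}}, \frac{\Gamma^{C_4}_{B}}{\card{B}}\}+\beta\le\alpha$ for all large $T$. So the investment policy is eventually feasible, and its utility lower-bounds $\fair$.

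For the upper bound on $\opt$, every policy obeying Assumption~\ref{assump:no-C4-in-optimal-policy} earns per step at most $\sum_g w_g\frac{\card{g\cap(C_1\cup C_3)}}{\card{g}}\EE[u(\xmax)]$. This is because individuals starting in $C_4$ can never be selected or move, and the expected utility is maximized at $\xmax$. Hence $\opt\le T\cdot(\text{this rate})$.

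The investment policy pays only a bounded burn-in cost over the first $\lceil\xmax/\cplus\rceil$ steps, possibly negative from $C_3$. After that it earns the maximal rate on a $1-O(\pfail)$ fraction of these individuals. Dividing by $T$ and letting $T\to\infty$, the burn-in vanishes and $1-\fair/\opt\le O(\pfail)$.

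\textbf{Main obstacle.} The delicate step is Step~1's constant bookkeeping. Assumption~\ref{assump:real-advantage} is normalized by $\card{g}$ rather than by $\card{g\cap(C_1\cup C_3)}$, and the category sizes may differ across groups. So I must check that the $\xmax$ terms truly cancel enough to produce a net reduction of $\beta$; the upper bound $\Gamma\le\card{\cdot}\xmax$ is what I would try first. If a residual term survives, I would absorb it through the $\alpha$ condition. The high-probability claim inherited from Lemma~\ref{lem:multi-step-variance-control} must also be made uniform in $T$. This should follow because the event ``never fails'' is monotone in $T$.
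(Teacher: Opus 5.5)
Your Step~1 fails at exactly the point you flagged, and sharper bookkeeping cannot repair it, because the inequality you are aiming for is false under the stated hypotheses. Set $S_g=(\Gamma^{C_1}_g+\Gamma^{C_3}_g)/\card{g}$ and $f_g=\card{g\cap(C_1\cup C_3)}/\card{g}$. Your decomposition then gives exactly
\[
\mu'_A(T)-\mu'_B(T)=(\mu_A-\mu_B)-(S_A-S_B)+\xmax\,(f_A-f_B)-(\eta_A-\eta_B).
\]
Assumption~\ref{assump:real-advantage} controls only $S_A-S_B$. The bound $\Gamma^{c}_g\le\card{g\cap c}\,\xmax$ only says that each group's rise $\xmax f_g-S_g$ is nonnegative. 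Nothing bounds $\xmax(f_A-f_B)$, and this term is of order $\xmax$ whenever $A$ has the larger share of $C_1\cup C_3$.

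Here is a concrete instance. Take $p(0)=0$, $p(x)=1$ for $x\ge 1$, and $C^+=1$. Put all of $A$ at a score $s<\xmax$, and put half of $B$ at $s$ and half at $0$. Then $C_4=\{0\}$ and $\pfail=0$, and Assumptions~\ref{assump:no-C4-in-optimal-policy}, \ref{assump:real-advantage}, \ref{assump:low-failure-prob-C1-C3}, and \ref{assump:exponential-decrease-of-fail-prob} all hold for any $\beta<s/2$. The initial gap is $s/2$, yet the investment policy deterministically drives the gap to $\xmax/2$.

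Taking $\alpha=\beta$ in the same instance also breaks the PoF claim. Half of $B$ is frozen at $0$, so $B$'s final mean is at most $\xmax/2$. Each selection raises an $A$-score by exactly $1$, so only the $A$-individuals that end at $\xmax$ can be selected unboundedly often. An $\alpha$-fair policy can therefore end with only about half of $A$ at $\xmax$. As a result $\lim_{T\to\infty}\pof$ is about $1/3$ (for $w_A=w_B=1/2$ and $s,\beta\ll\xmax$), although $\pfail=0$. Your fallback of absorbing the residual into the $\alpha$ condition is also unavailable: the first claim involves no $\alpha$, and the stated threshold on $\alpha$ contains no $\xmax(f_A-f_B)$ term.

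The paper's proof does not supply a missing idea here; it has the same hole. It bounds $\delta'(T)$ by $\frac{\card{A\cap C_4}}{\card{A}}\mu^{C_4}_A-\frac{\card{B\cap C_4}}{\card{B}}\mu^{C_4}_B+\frac{\card{X_{\text{decrease}}}}{\card{X}}\xmax$. In doing so it silently cancels the $\xmax$ mass of $C_1\cup C_3$ across the two groups, which is valid only when $f_A=f_B$. So the statement needs an extra hypothesis such as $f_A=f_B$, not a cleverer argument.

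Your handling of the case where $B$ overtakes $A$ is also off. The first claim has no $\alpha$, so comparing $\mu'_B-\mu'_A$ with the $\alpha$ threshold proves nothing for it. Moreover, Assumption~\ref{assump:real-advantage} does not even force $\card{\mu_A-\mu_B}>\beta/2$: if $B$'s $C_4$ part has the larger mean, the initial gap can fall below $\beta/2$, and the first claim would then demand $\delta'(T)<0$.

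Your other ingredients are sound and follow the same route as the paper. These are the $\tfrac32\pfail$ geometric bound on ever failing plus Markov (cleaner than the paper's Lemma~\ref{lem:multi-step-variance-control}), the frozen $C_4$ scores, the per-step upper bound on $\opt$, and the vanishing burn-in. They go through once fairness of the investment policy is secured by such an extra hypothesis.
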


The conditions in~\Cref{prop:muilti-step-single-opportunity} implies that the statement holds for $\alpha\geq \max\{\mu^{C_4}_{A}, \mu^{C_4}_{B}\}+\beta$ as well, although the statement is much weaker.

\smallskip \noindent\textbf{Multi-opportunity Setting.}
\label{subsec:multi-opportunity}
We further investigate a different setting, where the decision-maker will keep selecting for \emph{multiple times} before committing to a decision that would be used for the rest of the time steps. 
We will show that in this setting, the PoF goes to $0$ with high probability with assumptions much weaker than Assumptions \ref{assump:low-failure-prob-C1-C3} and \ref{assump:exponential-decrease-of-fail-prob}.  
First, we formally define the \emph{multi-opportunity} setting in the multi-step setting.

\begin{definition}
\label{def:multi-opportunity-setting}
Let $x^{(t)}$ be any score at time step $t$. In the multi-step setting with \emph{number of opportunities} $m$, the decision-maker selects an individual $m$ times, and makes selection decisions after observing \emph{all} outcomes of the $m$ selections. 

Let $\gsuccess$ be the fraction of the times the individual succeeds. 
\[\Delta(x^{(t)}) = \gsuccess\cdot C^+ + (1-\gsuccess)\cdot C^-.\]
\end{definition}

Note that the \emph{expected} score change remains the same:
\begin{align*}
\expect{\Delta(x^{(t)})} & = \expect{\gsuccess\cdot C^+ + (1-\gsuccess)\cdot C^-}\\
&= \expect{\gsuccess} \cdot C^+ + (1-\expect{\gsuccess})\cdot C^- \\
&= p(x)\cdot C^+ + (1-p(x))\cdot C^-.
\end{align*}
We also introduce a much milder assumption about the \emph{expectation} of the change of score.
\begin{Assumption}
	\label{assump:change-of-score}
	For any score $x\in \calX$, the \emph{expectation} of the change of scores, i.e., $\expect{\Delta(x)}$, is a \emph{positive integer}. 
\end{Assumption}

In the multi-opportunity setting, we could simply select all individuals in $C_1$ and $C_3$ to get a low PoF.
\vspace{-7pt}
\begin{tbox}
	\textbf{An even simpler long-term investment policy.}
	\begin{itemize}
		\item Select individuals from categories $C_1$ and $C_3$ in both groups $A$ and $B$. 
	\end{itemize}
\end{tbox}
\vspace{-7pt}

We formally show the guarantees for the ``even simpler investment policy'' as follows. Note that the quantifiers here are different from those of \Cref{prop:muilti-step-single-opportunity}: we first fix the time horizon $T$ before giving the parameters for the multi-opportunity setting. We defer the proof to Appendix~\ref{sec:appendix-repeated}.

\begin{restatable}{theorem}{ThmMultiStepMultiOpportunity}
\label{prop:muilti-step-multi-opportunity}
Assume Assumptions \ref{assump:no-C4-in-optimal-policy}, \ref{assump:real-advantage}, and \ref{assump:change-of-score} hold.
For any sufficiently large $T$ and $\delta>0$, suppose we have $m\geq 5(C^+-C^-)\cdot \log{T}$ in the multi-opportunity setting.
Then, with probability at least $99/100$, the even simpler long-term investment policy guarantees $\delta'(T)< \delta$ and $\pof =0$ as long as $\alpha\geq \max\{\mu^{C_4}_{A}, \mu^{C_4}_{B}\}$. 
\end{restatable}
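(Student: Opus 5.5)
The plan is to show that, with enough opportunities $m$, every score in $C_1\cup C_3$ moves up deterministically-in-effect by its expected integer increment in every round, with high probability. It then follows that both groups' $C_1\cup C_3$ mass reaches $\xmax$ while the $C_4$ mass stays put. Fairness and zero PoF are read off from the resulting limiting distributions.

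\textbf{Step 1 (concentration per round).} Fix an individual with score $x^{(t)}\in C_1\cup C_3$. By Assumption~\ref{assump:change-of-score}, $k(x):=\expect{\Delta(x)}$ is a positive integer. The realized change $\Delta(x^{(t)})=\gsuccess C^+ + (1-\gsuccess)C^-$ is an affine function of the empirical mean $\gsuccess$ of $m$ independent Bernoulli$(p(x))$ trials, with slope $C^+-C^-$. A Hoeffding bound gives
\[
\Pr\bigl[\,\abs{\Delta(x^{(t)})-k(x)}\geq 1/2\,\bigr]\leq 2\exp\bigl(-m/(2(C^+-C^-)^2)\bigr).
\]
The stated choice $m\geq 5(C^+-C^-)\log T$ should make this at most $1/T^{c}$ for a suitable constant $c$ (up to constants, possibly absorbing a factor $C^+-C^-$). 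Because scores are integers, a realized change rounding to $k(x)$ means the score increases by exactly $k(x)\geq 1$, clipped at $\xmax$.

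\textbf{Step 2 (union bound).} Take a union bound over all $T$ rounds and all individuals in $C_1\cup C_3$. If the population size is polynomial in $T$, or is treated as fixed with $T$ sufficiently large, then with probability at least $99/100$ every selected individual increases by at least $1$ in every round. In particular, no individual ever falls from $C_1\cup C_3$ into $C_4$. This uses the monotonicity in Assumption~\ref{assumpt:p}: categories are upper sets in $x$, since $C_2=\emptyset$ by Observation~\ref{obs:no-exploitative}, so increasing scores remain in $C_1\cup C_3$. Consequently, after at most $\xmax$ rounds, every such individual sits at $\xmax$. Individuals in $C_4$ are never selected, by Assumption~\ref{assump:no-C4-in-optimal-policy}, so their scores are frozen.

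\textbf{Step 3 (fairness and PoF).} Once all non-$C_4$ individuals are at $\xmax$, each group's mean is
\[
\mu_g' = w^{13}_g\,\xmax + \frac{\Gamma^{C_4}_g}{\card{g}},
\]
where $w^{13}_g$ is the fraction of group $g$ in $C_1\cup C_3$. The gap $\delta'(T)$ is then controlled by the $C_4$ contributions, which are bounded by $\max\{\mu^{C_4}_A,\mu^{C_4}_B\}\le\alpha$, so the policy is $\alpha$-fair. The claim $\delta'(T)<\delta$ needs the category fractions to match; I would try to extract this from Assumption~\ref{assump:real-advantage} together with the expectation-level definition of $\mu'$. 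This is where I expect the main difficulty, and the statement may need to be read in terms of the $C_1\cup C_3$ subpopulations.

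For PoF, I argue that on the high-probability event this policy is itself utility-optimal among rational policies. Any rational policy selects only from $C_1\cup C_3$. Selecting all of $C_1\cup C_3$ every round moves every individual to higher $p$, and hence to higher $\expect{u}$, at least as fast as any other rational policy. This is a coupling/monotonicity argument: $\expect{u(x)}$ is increasing in $x$, and for a sufficiently large horizon, investing in $C_3$ individuals pays off compared with leaving them idle at utility $0$. Since this optimal policy is also $\alpha$-fair, $\fair=\opt$ and $\pof=0$.

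The main obstacle is this last optimality comparison, namely that losses from funding $C_3$ early are dominated. I would handle it by comparing per-individual trajectories: once an individual reaches $\xmax$, each remaining round earns the maximal utility. The burn-in loss is bounded by $\xmax\abs{\uminus}$ per individual, which is eventually dominated as $T$ grows.
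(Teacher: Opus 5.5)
Your Steps 1--2 follow the paper's route: Hoeffding over the $m$ trials plus a union bound, so every selected score strictly increases and the $C_1\cup C_3$ mass reaches $\xmax$ within a finite burn-in $T_0$. You do this more carefully than the paper, which union-bounds only over rounds and whose Hoeffding step replaces $(C^+-C^-)^2$ by $C^+-C^-$. Your hedge on $m$ is warranted. At $m=5(C^+-C^-)\log T$ your bound is $2T^{-5/(2(C^+-C^-))}$, so in general $m$ of order $(C^+-C^-)^2\log(T\card{C_1\cup C_3})$ is needed. No rounding is needed either: $\Delta>k(x)-1/2\ge 1/2$ already gives $T_0\le 2\xmax$. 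The genuine gaps are in Step 3.

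\textbf{Fairness.} You leave $\delta'(T)<\delta$ open, and your $\alpha$-fairness sentence silently assumes the same matching of category fractions that you flag there. In the paper's proof, $\delta$ is the \emph{initial} gap $\mu_A-\mu_B$ (notation carried over from the proof of Theorem~\ref{prop:muilti-step-single-opportunity}), split into its $C_1$, $C_3$, $C_4$ contributions. The paper asserts that once the $C_1\cup C_3$ mass sits at $\xmax$, only the $C_4$ contribution survives. So the gap falls by the $C_1\cup C_3$ contribution, which exceeds $\beta$ by Assumption~\ref{assump:real-advantage}, and the residual is at most $\max\{\mu^{C_4}_A,\mu^{C_4}_B\}\le\alpha$. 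Your worry is exactly what that assertion skips. The final gap also contains $(f_A-f_B)\xmax$ with $f_g=\card{g\cap(C_1\cup C_3)}/\card{g}$, and Assumption~\ref{assump:real-advantage} constrains score sums, not these fractions. For instance, place a fraction $f_g$ of group $g$ at a $C_1$ score $s<\xmax$ and the rest at score $0\in C_4$, with $f_A>f_B$. Assumption~\ref{assump:real-advantage} holds for $\beta<(f_A-f_B)s$, yet the gap grows from $(f_A-f_B)s$ to $(f_A-f_B)\xmax$ while $\max\{\mu^{C_4}_A,\mu^{C_4}_B\}=0$. So if Assumption~\ref{assump:change-of-score} constrains only $C_1\cup C_3$ (your reading, and the one suggested by the appearance of $\mu^{C_4}_g$), the fairness claims need an extra hypothesis on the fractions that neither you nor the paper supplies. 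Read literally (every $x\in\calX$), the assumption forces $\expect{\Delta(x)}\ge 1$ everywhere, hence $C_4=\emptyset$. Then every individual reaches $\xmax$ on the high-probability event and the realized gap is $0$, which gives the claim for arbitrary $\delta>0$ directly.

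\textbf{PoF.} Your primary argument is that always selecting $C_1\cup C_3$ is exactly utility-optimal among rational policies, so $\fair=\opt$. This is unjustified and false in general. $\pof$ compares expected utilities over all realizations, not on the high-probability event. A realized change equals $C^-<0$ with probability at least $(1-p(x))^m>0$, so after bad draws an individual may still be in $C_3$ near the end of the horizon. An adaptive rational policy can stop funding such an individual, since funding is then a strict expected loss with no future payoff. Whenever this has positive probability, your policy is strictly suboptimal at every finite $T$. The paper does not argue exact optimality; it uses precisely your fallback. It bounds $\opt\le T\,U^*\card{C_1\cup C_3}$, while the investment policy earns at least $\bigl((T-T_0)U^*-T_0(U^+-U^-)\bigr)\card{C_1\cup C_3}$. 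This yields only $\lim_{T\to\infty}\pof=0$, not $\pof=0$ at a fixed $T$. So drop the exact-optimality claim and keep the burn-in comparison.
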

\begin{figure}[ht!]
    \centering
    \includegraphics[width=0.8\textwidth]{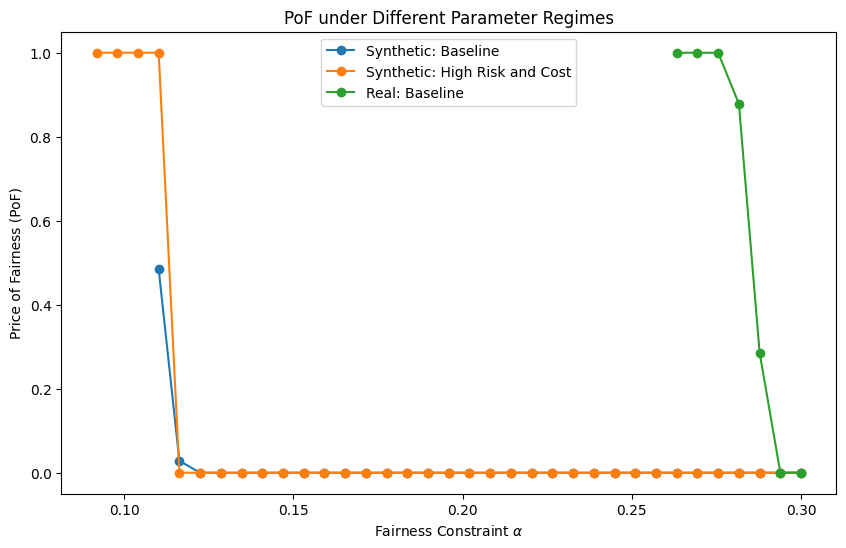}
    \caption{\label{fig:pof} PoF for different datasets and parameters.}
\end{figure}

\vspace{-3mm}
\section{Experiments}
\label{sec:exp}

\smallskip \noindent\textbf{Dataset and Implementation Details.}\footnote{Our code is available on \url{https://github.com/shahin-jabbari/LongTerm-Selection-Fairness}.}
We experimented on two datasets. One is a synthetic dataset with scores drawn from two Gaussian distributions with different means but the same variance, post-processed to integers in 0 to 100. 
The other is a real-world dataset of FICO credit scores~\cite{fico}, also used in~\cite{HardtPS16,LiuDRSH18}. We selected individuals labeled as White or Black to form groups $A$ and $B$. We used different parameter choices to define instances, and these choices are described in each experiment. In all experiments, the $p(x)$ increases linearly in the range of scores from 0 to 1. To solve for the optimal fair policy in the single-step, we used an LP solver since the optimization in Equation~\ref{eq:opt-fair} is linear in $\pi$. Motivated by Proposition~\ref{thm:thresh_opt}, we also implemented an algorithm that searches over all possible thresholds for each group. See Appendix~\ref{sec:appendix-exp} for more details.

\smallskip \noindent\textbf{Single-step Results.}
As our first experiment, we study the PoF on our two datasets. For each dataset, we use a baseline set of parameters to generate an instance. For the synthetic experiments, we also generate an instance with high cost and risk, which, compared to the baseline, has a higher $U^-$ and $C^-$ but the same $U^+$ and $C^+$. All instances satisfy Assumption~\ref{assumpt:struc_assumption}. See Appendix~\ref{sec:appendix-exp} for more details. Figure~\ref{fig:pof} plots the PoF for each choice as a function of $\alpha$ times the range of the score. So $\alpha=0$ corresponds to enforcing exact mean-equality and $\alpha=1$ corresponds to no constraints on the means. If for any given $\alpha$, no fair policy is found, there are no corresponding points on the curves. We observe that a lower $\alpha$ is required for synthetic data compared to real data to achieve 0 PoF. In Appendix~\ref{sec:appendix-exp-sensitivity}, we show that the simple thresholding algorithm is also effective when Assumption~\ref{assumpt:struc_assumption} is violated.

\begin{figure*}[ht!]
    \centering
    \begin{subfigure}[b]{0.48\textwidth}
        \centering
        \includegraphics[width=\textwidth]{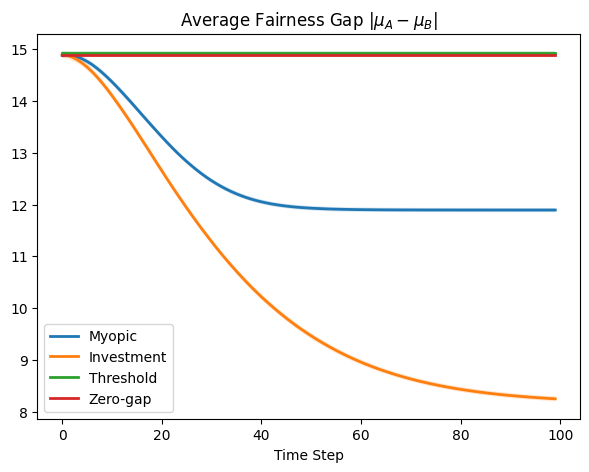}
        \caption{The average gap between group means over time\label{fig:multi-gap}}
    \end{subfigure}
    \begin{subfigure}[b]{0.49\textwidth}
    \centering
        \includegraphics[width=\textwidth]{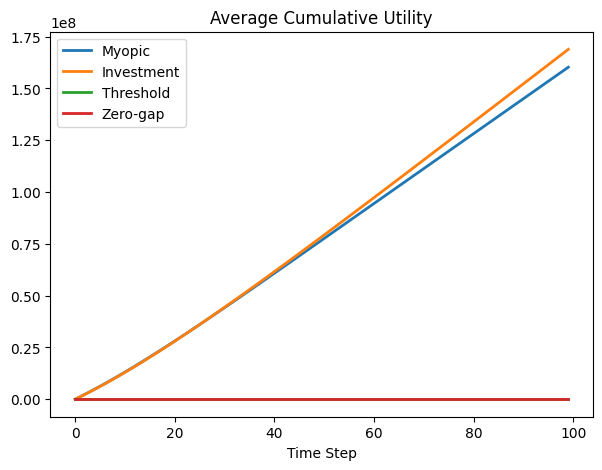}
        \caption{The avarege utility over time\label{fig:multi-utility}}
    \end{subfigure}
    \caption{The Multi-step experiments: left and right panels track the average fairness and utility of each of the policies over time.\label{fig:multi-step}}
\end{figure*}

\smallskip \noindent\textbf{Multi-step Results.}
We investigate the long-term dynamics of fairness and utility in a sequential selection process, modeled as a feedback loop where algorithmic decisions influence future population states. For this experiment, we sample a population of $N=1,000,000$ agents divided from the baseline synthetic distribution and analyze how the scores evolve over a horizon of $T=100$ steps under different policies. At each time step, a policy selects a subset of agents, who then succeed or fail.
We evaluate four distinct policies: a \emph{myopic} policy maximizing immediate utility, our \emph{investment} policy maximizing expected score growth, a {threshold} policy constrained by a relaxed fairness gap ($\alpha=0.01$), and a \emph{zero-gap LP} policy enforcing strict equality of expected future means via linear programming. The myopic and investment policies prioritize aggregate utility and capability growth, respectively, without explicit distributional constraints. In contrast, the threshold policy introduces a soft constraint, optimizing utility while bounding the expected inter-group mean score difference. The zero-gap LP approach represents a \emph{hard} constraint, solving for selection probabilities $\pi_g(x)$ that strictly force the expected fairness gap to zero at each step. These policies allow for an analysis of the trade-offs between maximizing total utility and achieving fairness in the multi-step setting.

Results are summarized in Figure~\ref{fig:multi-step}: left and right panels track the difference between the mean of the groups and total utility obtained by each of the policies, averaged over 5 runs.\footnote{The error bars are included, but they are so small that they are invisible due to the large population size. See Appendix~\ref{sec:appendix-exp-multi-step} for error bars on a smaller population.} We observe that our proposed investment policy is not only effective in reducing unfairness but also achieves higher utility compared to all the baselines. The experiments also show that our theory is fairly robust, as the $p(x)$ distribution does not satisfy Assumption~\ref{assump:exponential-decrease-of-fail-prob}, yet the empirical PoF is still small.

\vspace{-2mm}
\section{Discussion and Future Work}
\label{sec:discussion}
A central contribution of our framework is reframing the price of fairness (PoF) from a static penalty to a dynamic investment. Whereas prior work (e.g.,~\cite{Bertsimas11}) views PoF as an inherent utility loss, our multi-step analysis shows that this loss is largely transient, i.e., a \emph{down payment} that rehabilitates the score distribution of the disadvantaged group. Our theorems formalize the resulting \emph{patience premium}, indicating that the apparent conflict between fairness and utility is partly driven by short horizons: under long-term greedy policies, fairness constraints eventually align with profit maximization, provided the allocator can absorb the initial loss. Our fairness notion minimizes gaps between group means. While analytically convenient, such mean-based metrics capture only average outcomes and do not ensure individual-level fairness, nor do they preclude \emph{fairness gerrymandering}.

There are several directions to extend our work. First, individuals are often represented by high-dimensional features with unobservable attributes, and deployable policies may be significantly more complex. Moreover, feature changes may arise not only from practice but also from improvement~\cite{KleinbergR19,HeidariNG19} or manipulation and gaming~\cite{HardtMPW16,DongRSWW18}. Second, stronger notions of fairness could require closeness of entire distributions (e.g., via total variation or earth mover’s distance) or enforce fairness at every time step rather than only at the end~\cite{JabbariJKMR17}.
Third, we assume the decision-maker can use sensitive attributes, although optimal fair threshold policies may be group-specific when feature distributions differ~\cite{Corbett-DaviesPF+17}. In practice, sensitive attributes may be unavailable or prohibited~\cite{CelisMV21}, and populations often consist of multiple, intersecting protected groups~\cite{KearnsNRW18,Hebert-JohnsonKRR18}. Finally, many applications face resource constraints (e.g., college admissions) not captured by our model. We leave these questions for future work.
\section*{Acknowledgement}
We thank Alvand Vahediahmar for discussion on the earlier stages of this work. The authors used Gemini Pro as an aid in writing code. All
LLM-produced content was reviewed and edited by the authors.

\bibliographystyle{plainnat}
\bibliography{reference}

\appendix
\section{Omitted Details from Section~\ref{sec:single-step}}
\label{sec:appendix-opt-fair-character}

\subsection{Proof of Observation~\ref{obs:no-exploitative} (No $C_2$ category)}
\begin{proof}
We prove this by showing that the probability threshold required for positive utility is strictly higher than the threshold required for positive score change.

Recall that for an individual with score $x$, the expected utility is $\mathbb{E}[u(x)] = p(x)U^+ + (1-p(x))U^-$ and the expected score change is $\mathbb{E}[\Delta(x)] = p(x)C^+ + (1-p(x))C^-$.
An individual qualifies for $C_2$ if and only if they satisfy the following two inequalities regarding their success probability $p(x)$:
\begin{align*}
    & p(x) \ge \frac{-U^-}{U^+ - U^-};  \tag{Profitability}\\
    & p(x) < \frac{-C^-}{C^+ - C^-}. \tag{Score Degradation}
\end{align*}
However, Assumption 2 states that $U^+/U^- > C^+/C^-$. Since $U^-$ and $C^-$ are negative, we can manipulate this inequality to compare the thresholds:
\begin{align*}
    \frac{U^+}{U^-} > \frac{C^+}{C^-} &\implies \frac{U^+}{U^-} - 1 > \frac{C^+}{C^-} - 1 \\
    &\implies \frac{U^+ - U^-}{U^-} > \frac{C^+ - C^-}{C^-} \\
    &\implies \frac{U^-}{U^+ - U^-} < \frac{C^-}{C^+ - C^-} \tag{Taking the inverse flips the inequality} \\
    &\implies \frac{-U^-}{U^+ - U^-} > \frac{-C^-}{C^+ - C^-}. \tag{Multiplying by $-1$ flips the inequality back})
\end{align*}
Since $\tau_u = \frac{-U^-}{U^+ - U^-}$ is the profitability threshold and $\tau_{\Delta} = \frac{-C^-}{C^+ - C^-}$ is the score maintenance threshold, the derivation above confirms that $\tau_u > \tau_{\Delta}$. Consequently, it is impossible for any $p(x)$ to simultaneously satisfy $p(x) \ge \tau_u$ and $p(x) < \tau_{\Delta}$, which implies that $C_2$ is empty.
\end{proof}

\subsection{Proof of Proposition~\ref{pro:opt}}
\begin{proof}
Definition~\ref{def:score-category} implies that category 1 individuals are the only set that has non-negative expected utility. Hence, excluding them will decrease the total utility. Similarly, including individuals from other categories can only decrease the total utility. Assumption~\ref{assumpt:p} and the definition of expected utility implies that $\EE[u(x)]\geq 0$ iff $p(x) \geq -\uminus/(\uplus-\uminus)$.
\end{proof}

\subsection{Proof of Theorem~\ref{thm:thresh_opt}}
We first give the restatement of Theorem~\ref{thm:thresh_opt} before showing the proof.

\TheoremThresholdOPT*
\begin{proof}
Suppose the optimal fair policy $\pi$ is not a threshold. We show that there exists a threshold policy $\pi'$ that achieves at least the utility of $\pi$ while also satisfying the fairness constraint. To simplify exposition, we assume for all $ x$ that $ D_g(x) > 0$ for at least one group. If this is not the case, we can remove $x$ from $X$ without affecting the utility or the fairness constraint. If $\pi$ becomes a threshold policy after this change, we can easily create $\pi'$ from $\pi$ by setting $\pi'(x)$ for all removed $ x$'s to be either 0 or 1, depending on the position of $x$ and the threshold.

Since $\pi$ is not a threshold policy, there exists a group $g\in \{A, B\}$ and scores $x_1$ and $x_2$ such that $x_1 > x_2$, $\pi_g(x_1) < 1$, and $\pi_g(x_2) > 0$. We consider two cases based on the value of $\mu'_g(\pi_g)$ for the non-thresholding group $g$ and the other group $g'$: (1) $\mu'_g(\pi_g) - \mu'_{g'}(\pi_{g'}) \in [-\alpha, \alpha)$ and (2) $\mu'_g(\pi_g) = \mu'_{g'}(\pi_{g'}) + \alpha$.

\paragraph{Case (1):}
In case (1), we assume $\mu'_g(\pi_g) - \mu'_{g'}(\pi_{g'}) \in [-\alpha, \alpha)$. We consider policy $\pi'_g$ that is the same as $\pi_g$ for all $x \in X \setminus \{x_1, x_2\}$, $\pi'_g(x_1) = \pi_g(x_1) + \epsilon_1$ and $\pi'_g(x_2) = \pi_g(x_2) - \epsilon_2$ for sufficiently small $\epsilon_1, \epsilon_2 \geq 0$ satisfying $\epsilon_1 D_g(x_1) = \epsilon_2 D_g(x_2).$ In words, $\pi$' selects a higher fraction of individuals with the higher score of $x_1$ and a lower fraction of individuals with the lower score $x_2$ compared to $\pi$. We show $\pi'$ has a higher expected utility than $\pi$ and also satisfies fairness.

Note that the difference between the expected utility of $\pi'$ and $\pi$ can be written as $\epsilon_1 D_g(x_1) \EE\left[u(x_1)\right]-\epsilon_1 D_g(x_2)\EE\left[u(x_2)\right]$.  We show that this number is non-negative. By assumption $\epsilon_1 D_g(x_1) = \epsilon_2 D_g(x_2)$ in the construction of $\pi'$ and the definition of expected utility, this difference can be written as 
\begin{align*}
\epsilon_1 D_g(x_1) &\left(\EE\left[u(x_1)\right]-\EE\left[u(x_2)\right]\right) = \\
&\epsilon_1 D_g(x_1) \left(p(x_1) - p(x_2)\right)\left(\uplus - \uminus\right),
\end{align*}
which is positive since $p$ is monotone by Assumption~\ref{assumpt:p} and $\uplus - \uminus$ is positive.

It suffices to show $\pi'$ satisfies fairness. By assumption $\epsilon_1 D_g(x_1) = \epsilon_2 D_g(x_2)$ and the definition of expected change in score,
\begin{align*}
\mu'_g(\pi')-\mu'_g(\pi) 
&= \epsilon_1 D_g(x_1) \left((p(x_1) - p(x_2)\right)\left(\cplus - \cminus\right),
\end{align*}
which is positive since $p$ is monotone by Assumption~\ref{assumpt:p} and $\cplus - \cminus$ is positive. Moreover, this difference can be made arbitrarily small by setting $\epsilon_1$ as close as needed to $0$. Finally, $\delta'(\pi')=\delta'(\pi)+ \mu'_g(\pi')-\mu'_g(\pi)$. Together with the assumption of this case that $\delta'(\pi) < \alpha$ this implies $\delta'(\pi') < \alpha$ and hence $\pi'$ is also fair.

\paragraph{Case (2):}
In case (2), we assume $\mu'_g(\pi_g) = \mu'_{g'}(\pi_{g'}) + \alpha$. We consider two subcases: (2a) $\EE\left[\Delta(x_1)\right] \EE\left[\Delta(x_2)\right] > 0$, and (2b) $\EE\left[\Delta(x_1)\right]\EE\left[\Delta(x_2)\right] 
\leq 0$.

\paragraph{Case (2a):}
For case (2a), we consider the policy $\pi'_g$ that is the same as $\pi_g$ for all $x \in X \setminus \{x_1, x_2\}$, $\pi'_g(x_1) = \pi_g(x_1) + \epsilon_1$ and $\pi'_g(x_2) = \pi_g(x_2) - \epsilon_2$ for sufficiently small $\epsilon_1, \epsilon_2 \geq 0$ satisfying
$$\epsilon_1 = \frac{\epsilon_2 D_g(x_2) \EE[\Delta(x_2)]}{D_g(x_1) \EE[\Delta(x_1)]}\epsilon_2.$$

By this construction of $\pi'$ and the definition of expected change in score, the difference between $\mu'_g(\pi')$ and $\mu'_g(\pi)$ can be written as 
\begin{align*}
&\left[\frac{\epsilon_2 D_g(x_2) \EE[\Delta(x_2)]}{D_g(x_1) \EE[\Delta(x_1)]}\epsilon_2\right] D_g(x_1) \EE[\Delta(x_1)] - \epsilon_2 D_g(x_2) \EE[\Delta(x_2)]\\    
&= \epsilon_2 D_g(x_2) \EE[\Delta(x_2)] - \epsilon_2 D_g(x_2) \EE[\Delta(x_2)]=0,\\
\end{align*}
indicating that $\pi'$ also satisfies fairness. 

It remains to show that $\pi'$ achieves a strictly higher expected utility than $\pi$. By the construction assumption of $\pi'$ and the definition of expected utility, the difference between the expected utility of $\pi'$ and $\pi$ can be written as

\begin{align*}
&\epsilon_1 D_g(x_1) \EE[u(x_1)] - \epsilon_2 D_g(x_2) \EE[u(x_2)]\\
&= \frac{\epsilon_2 D_g(x_2) \EE[\Delta(x_2)]}{D_g(x_1) \EE[\Delta(x_1)]}\epsilon_2 D_g(x_1) \EE[u(x_1)] - \epsilon_2 D_g(x_2) \EE[u(x_2)]\\
&= \epsilon_2 D_g(x_2) \left( \frac{\EE[\Delta(x_2)]}{\EE[\Delta(x_1)]} \EE[u(x_1)] - \EE[u(x_2)] \right).
\end{align*}
Note that both $\epsilon_2$ and $D_g(x_2)$ are positive. If we show that $\EE[\Delta(x_2) \EE[u(x_1)]/\EE[\Delta(x_1)] > \EE[u(x_2)]$, we can demonstrate that $\pi'$ has a strictly greater expected utility than $\pi$. Since $\EE[\Delta(x_1)]\EE[\Delta(x_2)]>0$, they should have the same sign. We analyze each case separately.

In the case where both $\EE[\Delta(x_1)]$ and $\EE[\Delta(x_2)]$ are positive, by Assumption~\ref{assumpt:struc_assumption}, $\uplus/\uminus< \cplus/\cminus$. This indicates that
\begin{align*}
\vspace{-10mm}&\frac{\uplus}{\uminus} > \frac{\cplus}{\cminus} \implies \frac{\uplus-\uminus}{\uminus} > \frac{\cplus-\cminus}{\cminus}
\end{align*}

Multiplying both sides by $p(x_1)-p(x_2)$ gives us the following set of derivations. 
\begin{align*}
& \left(p(x_1)-p(x_2)\right)\frac{\uplus-\uminus}{\uminus} > \left(p(x_1)-p(x_2)\right)\frac{\cplus-\cminus}{\cminus} \\
& \implies \EE[u(x_1)]\EE[\Delta(x_2)] > \EE[u(x_2)]\EE[\Delta(x_1)] \\
& \implies \EE[u(x_1)]\frac{\EE[\Delta(x_2) }{\EE[\Delta(x_1)]} - \EE[u(x_2)] > 0,
\end{align*}
as desired. The second line multiplies both sides of the inequality by $p(x_1)-p(x_2)>0$, and the third line is obtained by algebraic simplification.

In case where both $\EE[\Delta(x_1)]$ and $\EE[\Delta(x_2)]$ are negative, the ratio $\EE[\Delta(x_2)]/\EE[\Delta(x_1)]$ is positive and greater than 1 since the expected score change is a monotone function of score. Therefore, 
\begin{align*}
\frac{\EE[\Delta(x_2)]}{\EE[\Delta(x_1)]} \EE[u(x_1)] - \EE[u(x_2)] \geq 
\EE[u(x_1)] - \EE[u(x_2)] > 0.
\end{align*}

\paragraph{Case (2b):}
In case (2b), we have that $\EE[\Delta(x_1)] \EE[\Delta(x_2)] < 0$. Since $\EE[\Delta(x)]$ is a monotone function of the score by Assumption~\ref{assumpt:p}, then it must be that $\EE[\Delta(x_1)] \geq 0$ and $\EE[\Delta(x_2)] < 0$. Consider two subcases: (2bi) $\EE[u(x_1)] < 0$, and (2bii) $\EE[u(x_1)] \geq 0$.

In case (2bi), where $\EE[u(x_1)] < 0$, construct $\pi'_g$ such that $\pi'_g(x) = \pi_g(x)$ for all  $x \in X \setminus \{x_2\}$ and $\pi_g'(x_2) = \pi_g(x_2) - \epsilon$ for sufficiently small $\epsilon$. Since $\EE[u(x_1)] > \EE[u(x_2)]$ by Assumption~\ref{assumpt:p} and $\EE[u(x_1)] < 0$, it follows that $\EE[u(x_2)] < 0$. Reducing $\pi_g'(x_2)$ by $\epsilon$ compared to $\pi_g(x_2)$ ensures that $\pi'$ remains fair since $\mu'_{\pi'_g} < \mu'_{\pi_g}$. Moreover, $\pi_g'$ has a higher expected utility than $\pi_g$ since $\EE[u(x_2)] < 0$. 

In case (2bii), where $\EE[u(x_1)] > 0$, we have $\EE[u(x_2)] \leq 0$ because if $\EE[u(x_2)] > 0$, then by Assumption~\ref{assumpt:struc_assumption}, we would have $\EE[\Delta(x_2)] > 0$, which is a contradiction. If $\EE[u(x_2)] < 0$ with the same argument as (2bi), we can create a fair $\pi'_g$ with strictly better utility by setting $\pi'_g(x_2) = \pi_g(x_2) - \epsilon$. Note that $\pi'_g$ is fair because $\mu'_{\pi'_g} < \mu'_{\pi_g}$. Finally, when
$\EE[u(x_2)] = 0$ then $p(x_2) = \tfrac{-\uminus}{\uplus - \uminus}$ by definition of expected utility. As a result,
\begin{align*}
\EE[\Delta(x_2)] &= p(x_2) \left(\cplus - \cminus\right) + \cminus 
&= \frac{\cminus \uplus - \uminus \cplus}{\uplus - \uminus}.
\end{align*}
The denominator of the fraction is positive, and by Assumption~\ref{assumpt:struc_assumption} the numerator is positive, implying $\EE[\Delta(x_2)] \ge 0$, contradicting the assumption $\EE[\Delta(x_2)] < 0$.
\end{proof}
\subsection{Proof of Theorem~\ref{pro:lb-pof}}
We first give a restatement of \Cref{pro:lb-pof} before giving the proof.
\ThmPoFGeneral* 
\begin{proof}
Consider the following set of instances where $w_A=1/2$, $w_B=1/2$, $D_A[x]=1$, $D_B[x']$ for $x > x' \in [0,1]$. In these instances, each group is evenly represented in the population, and in each group, the distribution is concentrated on individuals with a group-dependent score. We select $\cplus$, $\cminus$, $\uplus$, and $\uminus$ such that Assumption~\ref{assumpt:struc_assumption} is satisfied and the new scores remain in $X$

Since $p$ is a monotone function, we choose $x$ and $x'$ such that $p(x)=1$ and $p(x')=\frac{-\uminus}{\uplus - \uminus}$. This choice implies that $\EE[u(x)]=\uplus > 0$ and $\EE[u(x')]=0$. These choices imply $x$ belongs to category 1 while $x'$ belongs to category 3.

Proposition~\ref{pro:opt} implies that the optimal policy $\pi^*$ selects all individuals with score $x$ in category 1 and achieves an expected utility of $\uplus/2$. On the other hand, if any fair policy $\pi$ exists (depending on $\alpha$), it selects a fraction $\pi_A(x)$ and $\pi_B(x')$ such that 
$$\Big|x+\pi_A(x)\EE[\Delta(x)]+(x'+\pi_B(x')\EE[\Delta(x')]\Big|\leq \alpha.$$
The utility of such a policy is $\pi_A(x)\uplus$.

Setting $\cplus=1$, $\cminus=-\cplus$, $\uplus=1$, $\uminus=-1.1$, $x-x'=\eps$ (for arbitrary small $\eps$) implies that any fair policy should satisfy $\pi_A(x)\leq \alpha-\eps.$ Since the utility of any fair policy is proportional to $\pi_A(x)$, then the optimal fair policy sets $\pi_A(x)$ to 
$$\pi_A(x)=\min\{\max\{0, \alpha-\eps\},1\},$$ where the expression guarantees $\pi_A(x)\in[0,1]$. 

This instance gives a lower bound of $1-(\alpha-\eps)$ on the PoF as claimed.
\end{proof}

\subsection{Proof of Theorem~\ref{thm:pof-tv}}
The original statement of \Cref{thm:pof-tv} is as follows.
\ThmPoFNondegrade*

In what follows, we state \Cref{thm:pof-tv} in a more detailed version (\Cref{pro:pof-tv}) and give the proof.
\begin{proposition}
\label{pro:pof-tv}
For any $\epsilon > 0$, let $I_\epsilon$ denote the set of instances such that $D_{TV}(D_A, D_B) \leq \epsilon$ where $D_{TV}$ is the total variation distance. Then, in the single-step setting, $\max_{I_\epsilon}\left( 1 - \tfrac{\fair^+(I)}{\opt(I)}\right) \geq 1 - O(\alpha)$ where $\fair^+(I)$ is the expected utility of the best non-degrading fair policy.
\end{proposition}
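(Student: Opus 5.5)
The plan is to build an explicit family of instances in which the two score distributions nearly coincide, yet every non-degrading $\alpha$-fair policy must forgo almost all of the optimal utility. Take $X=[0,1]$ and $w_A=w_B=1/2$. Fix two scores $x_h>x_\ell$ with $p(x_h)=1$, so that $x_h\in C_1$ with $\EE[u(x_h)]=\uplus>0$ and $\EE[\Delta(x_h)]=\cplus$. Place $x_\ell$ in $C_4$, so that $\EE[u(x_\ell)]<0$ and $\EE[\Delta(x_\ell)]<0$; by Observation~\ref{obs:no-exploitative} this means $p(x_\ell)<-\cminus/(\cplus-\cminus)$. Set $D_A$ to put mass $1-\epsilon$ on $x_h$ and mass $\epsilon$ on $x_\ell$. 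Set $D_B$ to put mass $1-\epsilon$ on $x_h$ and mass $\epsilon$ on a lower score $x_0$, also in $C_4$, or alternatively to place the extra $\epsilon$ mass so that $\mu_B$ sits slightly below $\mu_A$. Either way $D_{TV}(D_A,D_B)\le\epsilon$. The cleanest variant is to make the common part identical, so the only asymmetry in the post-deployment means comes from which group is selected at $x_h$.

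The key step is to show that non-degradation removes the only lever a fair policy could use to lower $\mu'_A$. Non-degrading forces $\pi_g(x)=0$ on every $C_4$ score, so the only decision variables left are $a=\pi_A(x_h)$ and $b=\pi_B(x_h)$. Then
\[
\mu'_A-\mu'_B=\delta+\tfrac{1-\epsilon}{1}\,(a-b)\cplus ,
\]
where $\delta=\mu_A-\mu_B$ is of order $\epsilon(x_\ell-x_0)$. The utility is $\tfrac12(1-\epsilon)(a+b)\uplus$. The optimal policy takes $a=b=1$, which by Proposition~\ref{pro:opt} gives $\opt=(1-\epsilon)\uplus$. Its post-deployment gap is $\delta$, which is small, so this construction alone does not produce a large PoF. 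The instance therefore has to be modified so that fairness genuinely binds.

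The modification I would make is to create the gap from the selection itself. Put group $B$'s common mass on a score $x_h'$ slightly below $x_h$ that is in $C_1$ with $\EE[u(x_h')]\approx 0$ and $\EE[\Delta(x_h')]\approx 0$; this is possible only up to Assumption~\ref{assumpt:struc_assumption}. Then selecting $A$ at $x_h$ raises $\mu'_A$ by $a\,\cplus$ while $B$ barely moves. This is the Theorem~\ref{pro:lb-pof} construction, but now the two supports must overlap in mass $1-\epsilon$, and that conflicts with the two groups sitting on different points.

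Concretely, then, I would let both groups share mass $1-\epsilon$ on a point $z$ with $\EE[u(z)]=0$ and $\EE[\Delta(z)]\ge0$ small, which contributes no utility. Group $A$ additionally has mass $\epsilon$ on $x_h$, with $p(x_h)=1$, and group $B$ has mass $\epsilon$ on a $C_4$ point. All the utility then comes from $A$'s $\epsilon$-mass at $x_h$, so $\opt=\tfrac12\epsilon\uplus$. Fairness requires $\epsilon a\,\cplus+(\text{shift from }z)\le\alpha+(\text{$B$'s mean deficit})$. Scaling $\cplus$ relative to $\epsilon$, for example $\cplus\approx 1$ on $X=[0,1]$ with the gap $\delta$ of order $\epsilon$, I would try to get $a\le O(\alpha)/\epsilon$ in the right normalization. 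The difficulty is that this cap exceeds $1$, so the fair policy would be unconstrained. To fix that I would rescale, placing the $\epsilon$-masses at distance about $1$ apart so that the initial gap $\delta\approx\epsilon$, and let the score jump $\cplus$ also be of order $1$ but applied only through $a$. The arithmetic must then give $\mathrm{PoF}\ge 1-O(\alpha)$ with the constant depending on the ratio $\epsilon$ to $\cplus$; I would choose $\cplus$ in proportion to $\epsilon$.

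The main obstacle is this tuning. The total variation bound forces the utility-generating mass to be small, while the fairness slack $\alpha$ is absolute. The parameters $\cplus$, $\cminus$, $\uplus$ and $\uminus$ therefore have to scale with $\epsilon$, while keeping Assumption~\ref{assumpt:struc_assumption} and clipping to $X$, so that the fair cap on $a$ comes out as $O(\alpha)$ rather than $O(\alpha/\epsilon)$. Non-degradation is what prevents the fair policy from escaping this cap by selecting $B$'s $C_4$ mass.
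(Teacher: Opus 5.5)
Your proposal stops short of a proof. It ends on an unresolved ``tuning'' problem, and the instance you settle on is never pinned down.

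\textbf{The leak through $z$.} As written, your ``concretely'' construction leaks through the shared score $z$. If $\EE[\Delta(z)]>0$, a fair policy can take $\pi_A(z)=0$ and choose $\pi_B(z)$ freely. This shifts $\mu'_A-\mu'_B$ down by up to $(1-\epsilon)\EE[\Delta(z)]$ at zero utility cost, since $\EE[u(z)]=0$. Once this exceeds $\epsilon(x_h-y)+\epsilon\cplus$, the policy can set $a=1$ and the PoF is $0$. Note also that $B$'s initial deficit $\epsilon(x_h-y)$ adds to the gap; it does not relax the constraint.

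``Small'' is therefore not enough. One fix is $\EE[\Delta(z)]=0$, which is available with equality in Assumption~\ref{assumpt:struc_assumption}, e.g.\ $\uplus=-\uminus=\cplus=-\cminus$ and $p(z)=1/2$. With that repair your instance works for all fair policies, since your only $C_4$ mass sits in $B$, where selecting it can only widen the gap.

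The other fix is what the paper does:
\begin{itemize}
\item put the shared $1-\epsilon$ mass of both groups on a score $x''\in C_4$, so non-degradation freezes it;
\item give $A$ mass $\epsilon$ at $x$ with $p(x)=1$;
\item give $B$ mass $\epsilon$ at $x'\in C_3$ with $p(x')=1/2$ and $\cplus=-\cminus$, so $\EE[\Delta(x')]=0$.
\end{itemize}
Then only $\pi_A(x)$ moves the means, and fairness reads $\epsilon(x-x')+\epsilon\,\pi_A(x)\cplus\le\alpha$.

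\textbf{The obstacle is not real.} That constraint gives $\pi_A(x)\le\alpha/(\epsilon\cplus)$, so the PoF is at least $1-\alpha/(\epsilon\cplus)$. This is $1-O(\alpha)$ for each fixed $\epsilon$, which is exactly what the paper's reduction to Theorem~\ref{pro:lb-pof} delivers; with $\cplus=1$ it gives $\pi_A(x)\le\alpha/\epsilon-(x-x')$. So the constant hidden in the $O(\cdot)$ depends on $\epsilon$.

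You instead look for a cap of $O(\alpha)$ uniform in $\epsilon$, and none exists. The cap exceeds $1$ only when $\alpha\gtrsim\epsilon\cplus$, and in that regime nothing better is possible. The optimal policy is group-agnostic. If the expected post-deployment scores $f(x)=x+\pi(x)\EE[\Delta(x)]$ lie in an interval of length $L$ (e.g.\ $L=1$ with clipping to $[0,1]$), then $|\mu'_A-\mu'_B|=|\sum_x(D_A(x)-D_B(x))f(x)|\le L\,D_{TV}(D_A,D_B)\le L\epsilon$. Hence the unconstrained optimum is already $\alpha$-fair and non-degrading once $\alpha\ge L\epsilon$, and the PoF over $I_\epsilon$ is $0$ there.

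Your proposed remedy also points the wrong way. The cap is $\alpha/(\epsilon\cplus)$, so taking $\cplus\propto\epsilon$ makes it $\Theta(\alpha/\epsilon^2)$. Only $\cplus\gtrsim 1/\epsilon$ would help, and clipping rules that out.

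To finish: set $\EE[\Delta(z)]=0$ or move the shared mass into $C_4$, compute the cap, and accept the $\epsilon$-dependent constant.
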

\begin{proof}
The construction of the proof follows a similar template to the construction in the proof of~\Cref{pro:lb-pof}. Again we consider two distributions such that $w_A=w_B=1/2$, $D_A[x]=\epsilon$, $D_A[x'']=1-\epsilon$, $D_B[x']=\epsilon$, and $D_B[x'']=1-\epsilon$ for $x > x' > x'' \in [0,1]$. We set $p(x'')\approx 0$, $p(x')=1/2$, and $p(x)=1$. As such, $x$ belongs to category 1, $x'$ to category 3, and $x''$ to category 4.

By construction, $D_{TV}(D_A, D_B)=\epsilon$ and Assumptions~\ref{assumpt:p}~and~\ref{assumpt:struc_assumption} are also satisfied if we set $\uplus, \uminus, \cplus, $ and $\cminus$ exactly as in~\Cref{pro:lb-pof}. Since the class of fair policies is restricted to be non-degrading, no policy in this class can satisfy $\pi_g(x'')>0$ for $g\in\{A,B\}$ because $x''$ belongs to category 4. This reduces the analysis to that of~\Cref{pro:lb-pof}, which results in the same bound on PoF.
\end{proof}

\subsection{An Upper Bound for the PoF}
Having proved multiple lower bounds, we now characterize sufficient conditions to get low price of fairness as in \Cref{pro:pof-sufficient-condition}.
\begin{proposition}
\label{pro:pof-sufficient-condition}
Suppose, for instance $I$, the following conditions hold for an $\epsilon \in (0, 1/2)$.
\begin{enumerate}
    \item $\sum_{x\in C_1} D_B(x)\EE[u(x)] \geq \eps \cdot \sum_{x\in C_1} D_A(x)\EE[u(x)]$.
    \item \emph{At least one} of the following properties holds:
    \begin{itemize}
        \item Either $\sum_{x\in C_1} D_B(x) \EE[\Delta(x)]\geq \mu_A-\mu_B-\alpha$;
        \item Or there exists a subset of individual $\Ctilde_3\subseteq C_3$ \emph{in group $B$}, such that 
        \begin{align*}
        \card{\sum_{x\in \Ctilde_3}D_B(x)\EE\left[u(x)\right]} \leq \eps\cdot\sum_{x\in C_1}D_B[x]\EE[u(x)],
        \end{align*}
        and
        \[\sum_{x\in C_1 \cup \Ctilde_3} D_B(x) \EE[\Delta(x)] \geq \mu_A-\mu_B-\alpha\].
    \end{itemize}
\end{enumerate}
Then, $\pof = \left( 1 - \tfrac{\fair(I)}{\opt(I)}\right)\leq 1-\eps/4.$
\end{proposition}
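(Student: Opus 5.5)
We will prove the bound by exhibiting one explicit feasible $\alpha$-fair policy $\pi$ and showing $V(\pi,I)\ge \tfrac{\eps}{4}\opt(I)$, which immediately gives $1-\fair(I)/\opt(I)\le 1-\eps/4$. By Proposition~\ref{pro:opt}, the optimal policy selects exactly $C_1$ in both groups, so
\[
\opt(I)=w_A S_A+w_B S_B,\qquad S_g:=\sum_{x\in C_1}D_g(x)\EE[u(x)]\ge 0 .
\]
Condition 1 reads $S_B\ge \eps S_A$. The candidate policy will never select group $A$ at all, so that $\mu'_A=\mu_A$; the only lever is then to raise $\mu_B$. For group $B$ we select $C_1$ (second bullet: $C_1\cup\Ctilde_3$).

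The first step is fairness. Since $\mu_A\ge\mu_B$ and group $B$ only receives individuals with $\EE[\Delta]\ge 0$, we have $\mu'_B\ge \mu_B$. By either bullet of Condition 2, $\mu'_B\ge\mu_A-\alpha$. One must also ensure that $\mu'_B$ does not overshoot $\mu_A+\alpha$. If it does, we use a continuity argument: scale down the selection of $B$ uniformly by a factor $\lambda\in[0,1]$, or reinstate part of group $A$'s $C_1$ selection, which only increases utility and raises $\mu'_A$. Since $\mu'$ moves continuously and the interval $[\mu_A-\alpha,\mu_A+\alpha]$ has width $2\alpha$, a fair point exists on the path from ``only $B$'' to ``both groups select $C_1$.'' I expect this overshoot handling to be the fiddly part. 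The cleanest route is to add $A$'s $C_1$ mass continuously; this keeps utility monotone non-decreasing along the path, so the utility bound proven for the endpoint ``only $B$'' still holds at whichever fair point is chosen.

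The second step is utility. In the first case, $V(\pi)\ge w_B S_B$. In the second case,
\[
V(\pi)\ge w_B\bigl(S_B-\bigl|\textstyle\sum_{\Ctilde_3}D_B\EE[u]\bigr|\bigr)\ge w_B(1-\eps)S_B\ge w_B S_B/2,
\]
using $\eps<1/2$. It remains to compare $w_B S_B$ with $\opt(I)=w_AS_A+w_BS_B$. Using $S_A\le S_B/\eps$, we get
\[
\opt(I)\le \Bigl(\frac{w_A}{\eps}+w_B\Bigr)S_B .
\]
Here is the real obstacle: the weights. The ratio $w_BS_B/\opt(I)$ is only bounded by $\eps$ times a weight-dependent factor, so a tiny $w_B$ breaks the bound unless the conditions are read as weighted (or $w_A\le$ a constant times $w_B$).

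I would therefore adopt the reading in which $D_g$ already absorbs the weight $w_g$, i.e.\ Condition 1 compares the weighted contributions $w_BS_B\ge\eps\, w_AS_A$. Under that reading, $V(\pi)\ge \tfrac12 w_BS_B$ and $\opt(I)\le(1+1/\eps)\,w_BS_B\le (2/\eps)\,w_BS_B$, which gives $V(\pi)/\opt(I)\ge \eps/4$ as required.
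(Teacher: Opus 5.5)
You use the same witness policy as the paper (select only $C_1\cap B$, plus $\Ctilde_3$ under the second bullet) and the same comparison through Condition~1. The paper's proof is looser than yours: it drops the factor $w_B$ from the fair utility, and it simply asserts that the witness is fair. Your objection about the weights is correct. Take point masses $D_A(x_A)=D_B(x_B)=1$ in $C_1$ with $\EE[\Delta]>0$ at both points and $x_A-x_B=\alpha+\EE[\Delta(x_B)]$. The only $\alpha$-fair policy is then $\pi_A=0$, $\pi_B=1$, so $\fair(I)/\opt(I)\to 0$ as $w_B\to 0$, while the unweighted Condition~1 is unaffected.

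The genuine gap is your overshoot repair, and it cannot be fixed. Write $M_A:=\sum_{x\in C_1}D_A(x)\EE[\Delta(x)]$ and $M_B:=\sum_{x\in C_1\cup\Ctilde_3}D_B(x)\EE[\Delta(x)]$, with $\Ctilde_3=\emptyset$ under the first bullet. Along your path from ``only $B$'' to ``both groups select $C_1$'', $\mu'_B-\mu'_A$ decreases continuously from $M_B-(\mu_A-\mu_B)$ to $M_B-M_A-(\mu_A-\mu_B)$. Continuity yields a fair point only if this second value is at most $\alpha$, and nothing in the hypotheses guarantees that; the width $2\alpha$ of the target interval plays no role. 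Your other option, shrinking $B$'s selection, does reach fairness but forfeits the utility bound.

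In fact the proposition is false as stated. Take $X=[0,3]$, $w_A=w_B=\tfrac12$, $D_A(2)=a$, $D_A(1)=1-a$, $D_B(1+a)=1$, $p(1)=0$, $p(1+a)=p(2)=1$, $\uplus=\cplus=1$, $\uminus=-1.1$, $\cminus=-1$, and $\alpha=0$. Then:
\begin{itemize}
\item $\mu_A=\mu_B=1+a$, score $1$ is in $C_4$, and scores $1+a$ and $2$ are in $C_1$;
\item Condition~1 holds in either reading, and the first bullet of Condition~2 reads $1\ge 0$;
\item fairness forces $\pi_B(1+a)=a\,\pi_A(2)-(1-a)\pi_A(1)\le a$, hence $\fair(I)\le a$, whereas $\opt(I)=(1+a)/2$.
\end{itemize}
For $a<\eps/8$ this gives $\pof>1-\eps/4$. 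Here your path runs from a gap of $1$ to a gap of $1-a$ and never becomes fair.

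The paper's proof has the same hole, since it never checks $\mu'_B\le\mu'_A+\alpha$. What is missing is an upper-bound hypothesis such as $M_B\le\mu_A-\mu_B+\alpha+M_A$. With it, your path argument is valid: utility is non-decreasing along the path, and $\mu'_B-\mu'_A$ must hit $\alpha$ if it starts above it. Together with your weighted reading of Condition~1, your computation then gives $\eps/4$.
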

\begin{proof}
We show that under such conditions, the policy that selects all individuals in $C_1\cap B$ and the individuals in $\tilde{C}_3$ would be fair, and the utility would make up at most $\eps/2$-fraction of the single-step optimal policy. Since the optimal fair policy is at least as good as this policy, we have an upper bound on the $\pof$.

In particular, the second condition provides two sufficient conditions for which an $\alpha$-fair policy selects all the category 1 individuals from group $B$. If $\sum_{x\in C_1} D_B(x) \EE[\Delta(x)]\geq \mu_A-\mu_B-\alpha$, simply admitting individuals $C_1 \cap B$ would be enough. On the other hand, if the conditions in the second bullet hold, we can admit all individuals from $C_1 \cap B$ and $\Ctilde_3$ to guarantee fairness.

In either way, the utility we could gain is at least $\max\{(1-\eps) \cdot \sum_{x\in C_1} D_B(x)\EE[u(x)], \eps\cdot (1-\eps) \cdot \sum_{x\in C_1} D_A(x)\EE[u(x)]\}$. On the other hand, the utility of the optimal algorithm is at most $\sum_{x\in C_1} D_B(x)\EE[u(x)] + \sum_{x\in C_1} D_A(x)\EE[u(x)]$. Therefore, assuming w.log. that $\sum_{x\in C_1} D_A(x)\EE[u(x)] \geq \sum_{x\in C_1} D_B(x)\EE[u(x)]$ the price of fairness could be bounded by
\begin{align*}
    \pof &\leq 1-\frac{\eps\cdot (1-\eps) \cdot \sum_{x\in C_1} D_B(x)\EE[u(x)]}{2\cdot \sum_{x\in C_1} D_A(x)\EE[u(x)]}\\
    & \leq 1-\eps/4, \tag{using $\eps<1/2$}
\end{align*}
as desired.
\end{proof}

\section{Omitted Details from Section~\ref{sec:repeated}}
\label{sec:appendix-repeated}
We provide the detailed analysis and proofs of the statements in Section~\ref{sec:repeated}.

\subsection{A More Detailed Description of the Multi-step Objective}
\label{subsec:detailed-multi-step-objective}
We start with a more detailed characterization of the multi-step objective function. 
The detailed characterization is based on the \emph{tracking} of score changes over the stochastic process.
Let $\calG(x)^{(t)}, t\in [T]$ be the family of scores starting from $x$ at the $t$-th time step (i.e., $\calG(x)^{(1)}=\{x\}$). 
Note that $\calG(x)^{(t)}$ is a random variable depending on the randomness in each step, and we let $G(x)^{(t)}$ be a certain realization of $\calG(x)^{(t)}$ (we have discussed this in Section~\ref{subsec:multi-step-model-and-notation}).
Furthermore, if we fix all $\calG(x)^{(t)}$ for $x\in X$, we could define 
$\Dtilde^{(t)}(x):=\frac{\card{\{y\in \calG(x)^{(t)}\mid y=x^{t}\}}}{\card{\calG(x)^{(t)}}}$
as the fraction of individuals with score $x^{(t)}$ \emph{in $\calG(x)^{(t)}$}.
This is important since we want to insist on $\sum_{x^{(t)}\in \calG(x)^{(t)}}\Dtilde^{(t)}(x) = \Dtilde(x)$.
For randomness in step $t$, we use $\EE_{(1:t-1)}$ and $\Pr_{(1:t-1)}$ to denote the expectation and probability induced by the randomness of the first $t-1$ steps, and $\EE_{t}$ and $\Pr_{t}$ to denote the same notion with the randomness of the $t$-th step (conditioning on certain realizations of the first $t-1$ steps). The analogous $D_{g}^{(t)}(x)$ and $\pi^{(t)}_g(x^{(t)})$ for $g\in \{A,B\}$ can be defined by focusing on individuals from group $g$. The objective function in Equation~\ref{equ:multi-step-utility-simplified} can be written in the alternative form as in Equation~\ref{equ:multi-step-utility}. 
Note that in Equation~\ref{equ:multi-step-utility}, we wrote $x^{(t)}\in G(x)^{(t)}$ simply as $x^{(t)}$ for simplicity of the notation.

\begin{figure*}[htb!]
\begin{equation}
\label{equ:multi-step-utility}
\begin{aligned}
& \max_{\Pi_{T}} \expect{\sum_{t=1}^{T} \sum_{g \in {A,B}} w_g \sum_{x \in X} D^{(t)}_g(x) \pi^{(t)}_g(x) u(x)}\\
& = \max_{\Pi_{T}} \sum_{g\in \{A, B\}}\expect{w_g\sum_{t=1}^{T}\sum_{x\in X} \sum_{x^{t}\in \calG(x)^{(t)}}\pi^{t}_g(x^{t})\cdot \Dtilde^{(t)}_g(x^{t})\cdot u(x^{t})} \\
& = \max_{\{\pi^{(t)}\}_{t=1}^{T}} \sum_{g\in \{A, B\}}w_g\sum_{x\in X} \sum_{t=1}^{T}  \paren{\EE_{t}\bracket{\sum_{x^{(t)}} \Dtilde^{(t)}_g(x) \cdot u(x^{(t)})\cdot \pi^{t}_g(x^{(t)}) \mid G(x)^{(t)}} \cdot \Pr_{{(1:t-1)}}\paren{\calG(x)^{(t)}=G(x)^{(t)}}}.
\end{aligned}
\end{equation}
\end{figure*}

From Equation~\ref{equ:multi-step-utility-simplified}, the correlation between the steps is more obvious. Here, $\calG(x)^{(t)}$ is strongly dependent on the realizations of previous steps.

\subsection{Justification of Assumptions~\ref{assump:low-failure-prob-C1-C3} and \ref{assump:exponential-decrease-of-fail-prob}}
\label{subsec:assump-justification}
We did not provide justifications for Assumptions~~\ref{assump:low-failure-prob-C1-C3} and \ref{assump:exponential-decrease-of-fail-prob} due to space limits, and we provide the justification here. 

\paragraph{Justification of Assumptions~\ref{assump:low-failure-prob-C1-C3}.} Assumption~\ref{assump:low-failure-prob-C1-C3} is essentially an assumption about ``stability''. Here, the failure probability scales inversely with $\xmax$, which directly controls the variance of the random walk. Furthermore, we can tolerate the failure probability up to $\beta$ since $\beta$ is the initial gap between the average scores of the advantaged and disadvantaged groups. A failure probability that is way more than $\beta$ would effectively ``cancel'' the advantage of group $A$. In real world, this assumption corresponds to a scenario when individuals in $C_1 \cup C_3$ are relatively trustworthy and are expected to pay back.

\paragraph{Justification of Assumptions~\ref{assump:exponential-decrease-of-fail-prob}.} We use a very natural family of probability distribution, namely the logistic distribution, to justify the assumption. 
A typical logistic distribution is a monotone function with characterized as follows:
\begin{align*}
    p(x) = \frac{1}{Z}\cdot \frac{1}{1+\exp(-kx)},
\end{align*}
where $k$ is a parameter and $Z$ is a normalization factor. Using this formulation, we have
\begin{align*}
    1-p(x) = \frac{1}{Z}\cdot \frac{\exp\paren{-kx}}{1+\exp\paren{-kx}}\geq \frac{\exp\paren{-kx}}{2Z},
\end{align*}
where the inequality holds since $-kx\leq 0$.
On the other hand, we have 
\begin{align*}
    1-p(x+C^+) &= \frac{1}{Z}\cdot \frac{\exp\paren{-kx-kC^+}}{1+\exp\paren{-kx-kC^+}}\\
    &\leq \frac{1}{Z}\cdot \exp\paren{-kx-kC^+} \tag{$\exp(\cdot)$ is a non-negative function}\\
    &= \frac{1}{Z}\cdot \exp\paren{-kx}\exp\paren{-kC^+}.
\end{align*}
Therefore, as long as $\exp\paren{-kC^+} \leq \frac{1}{6}$, the inequality will hold. For $k=1$, this is true for all $C^+\geq \ln(6)$, which is around $1.79$.

\subsection{Technical Preliminaries}
\label{app:tech-prelim}
We introduce some technical tools we used in the analysis. We start with the standard concentration inequalities, including Markov bound and Chernoff-Hoeffding bound. 

\begin{proposition}[Chernoff-Hoeffding bound]
\label{prop:chernoff}
	Let $X_1,\ldots,X_m$ be $m$ independent random variables with support in $[0,1]$. Define $X := \sum_{i=1}^{m} X_i$. Then, for every $t > 0$, 
	\begin{align*}
		\Pr\paren{\card{X - \expect{X}} > t} \leq 2 \cdot \exp\paren{-\frac{2t^2}{m}}. 
	\end{align*}
\end{proposition}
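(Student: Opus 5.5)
This is the standard Chernoff--Hoeffding inequality, so I would cite it rather than re-derive it. If a self-contained argument is wanted, the route is the exponential-moment (Chernoff) method combined with Hoeffding's lemma.

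\textbf{Upper tail.} Fix $\lambda>0$ and apply Markov's inequality to $e^{\lambda(X-\expect{X})}$:
\[
\Pr\paren{X-\expect{X}>t}\le e^{-\lambda t}\,\expect{e^{\lambda(X-\expect{X})}}=e^{-\lambda t}\prod_{i=1}^m \expect{e^{\lambda(X_i-\expect{X_i})}}.
\]
The product form uses independence of the $X_i$. The key ingredient is Hoeffding's lemma: for a random variable $Y$ supported in $[0,1]$,
\[
\expect{e^{\lambda(Y-\expect{Y})}}\le e^{\lambda^2/8}.
\]
To prove it, I would argue as follows.
\begin{enumerate}
\item By convexity, bound $e^{\lambda y}\le (1-y)+y e^{\lambda}$ on $[0,1]$.
\item Taking expectations with $q=\expect{Y}$ gives $\expect{e^{\lambda(Y-q)}}\le e^{\phi(\lambda)}$, where $\phi(\lambda)=-\lambda q+\ln(1-q+qe^{\lambda})$.
\item Check that $\phi(0)=\phi'(0)=0$ and $\phi''(\lambda)=r(1-r)\le 1/4$ for some $r\in[0,1]$.
\item Taylor's theorem then gives $\phi(\lambda)\le \lambda^2/8$.
\end{enumerate}
This lemma is the only non-routine step and is where I expect the main effort. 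The delicate point is the computation of $\phi''$ as a Bernoulli-type variance bounded by $1/4$.

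\textbf{Optimising $\lambda$.} Plugging in the lemma gives $\Pr\paren{X-\expect{X}>t}\le \exp\paren{-\lambda t+m\lambda^2/8}$. Choosing $\lambda=4t/m$ yields $\exp\paren{-2t^2/m}$.

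\textbf{Lower tail and union.} Apply the same argument to $1-X_i$, which is also supported in $[0,1]$. This bounds $\Pr\paren{\expect{X}-X>t}$ by the same quantity. A union bound over the two tails gives the factor $2$.
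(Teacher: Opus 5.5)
Your proposal is correct: the paper gives no proof of this statement and simply records it as a standard concentration inequality among its technical preliminaries, which matches your first instinct to cite it. The self-contained derivation you sketch is the textbook argument, and each step checks out: Chernoff's exponential-moment method, Hoeffding's lemma via the convexity bound and $\phi''(\lambda)=r(1-r)\le 1/4$, the choice $\lambda=4t/m$, and the reflection $X_i\mapsto 1-X_i$ for the lower tail.
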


More generally, for random variables $\{X_i\}_{i=1}^{m}$ supported on $\{[a_i, b_i]\}_{i=1}^{m}$, we can also write the concentration of the \emph{average value} as follows.

\begin{proposition}[Hoeffding bound beyond 0-1 variables]
\label{prop:hoeffding-general}
Let $X_1,\ldots,X_m$ be $m$ independent random variables with support in $[a_i,b_i]$ for $X_i$. Define $\bar{X} := \frac{1}{m}\cdot \sum_{i=1}^{m} X_i$.
\begin{align*}
    P\left( |\bar{X} - \expect{\bar{X}} | \ge \epsilon \right) \le 2 \exp \left( - \frac{2m^2 \epsilon^2}{\sum_{i=1}^m (b_i - a_i)^2} \right)
\end{align*}
\end{proposition}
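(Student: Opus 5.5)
The plan is the standard Chernoff (exponential moment) method, combined with Hoeffding's lemma for bounded variables. First, center the variables: set $Y_i := X_i - \expect{X_i}$. Then $\expect{Y_i}=0$, and $Y_i$ takes values in an interval of length $b_i-a_i$, namely $[a_i-\expect{X_i},\, b_i-\expect{X_i}]$. The event $\bar{X}-\expect{\bar{X}} \ge \epsilon$ is exactly the event $\sum_{i=1}^m Y_i \ge m\epsilon$. Write $S := \sum_{i=1}^m (b_i-a_i)^2$. I will bound the upper tail by $\exp(-2m^2\epsilon^2/S)$. The lower tail follows by applying the same argument to $-Y_i$, whose ranges have the same lengths. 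A union bound over the two tails then gives the factor $2$.

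The key ingredient is Hoeffding's lemma: if $\expect{Y}=0$ and $Y\in[a,b]$, then $\expect{e^{sY}} \le \exp\paren{s^2(b-a)^2/8}$ for all $s\in\RR$. I would prove it in three steps.

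\begin{enumerate}
\item By convexity of $y\mapsto e^{sy}$ on $[a,b]$, we have $e^{sy} \le \frac{b-y}{b-a}e^{sa} + \frac{y-a}{b-a}e^{sb}$.
\item Taking expectations and using $\expect{Y}=0$ gives $\expect{e^{sY}} \le \frac{b}{b-a}e^{sa} - \frac{a}{b-a}e^{sb}$. With $\theta := -a/(b-a)\in[0,1]$ and $h := s(b-a)$, this right-hand side equals $e^{L(h)}$, where $L(h) = -\theta h + \ln(1-\theta+\theta e^{h})$.
\item One checks that $L(0)=0$ and $L'(0)=0$. Moreover $L''(h) = q(1-q)$ for $q = \theta e^h/(1-\theta+\theta e^h)\in[0,1]$, so $L''(h)\le 1/4$. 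Taylor's theorem with remainder then gives $L(h)\le h^2/8 = s^2(b-a)^2/8$.
\end{enumerate}

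With the lemma in hand, fix any $s>0$ and apply Markov's inequality to $e^{s\sum_i Y_i}$. Independence lets the moment generating function factor into a product:
\begin{align*}
\Pr\paren{\sum_{i=1}^m Y_i \ge m\epsilon} \le e^{-sm\epsilon}\prod_{i=1}^m \expect{e^{sY_i}} \le \exp\paren{-sm\epsilon + \frac{s^2 S}{8}}.
\end{align*}
Minimizing the exponent over $s$ gives $s = 4m\epsilon/S$, and the bound becomes $\exp\paren{-2m^2\epsilon^2/S}$, as required. Adding the symmetric lower-tail bound completes the proof.

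The only nontrivial step is Hoeffding's lemma, specifically the uniform bound $L''\le 1/4$. This follows because $L''$ is the variance of a Bernoulli$(q)$ variable, which is at most $1/4$. Everything else is routine Chernoff bookkeeping. The 0-1 Chernoff--Hoeffding bound in \Cref{prop:chernoff} is the special case $a_i=0$, $b_i=1$, rewritten in terms of the sum instead of the average.
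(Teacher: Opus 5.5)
Your proof is correct and complete. It is the standard Chernoff exponential-moment argument combined with Hoeffding's lemma, and the computations check out: the reparametrization $L(h)=-\theta h+\ln(1-\theta+\theta e^{h})$, the identity $L''=q(1-q)\le 1/4$, and the optimizing choice $s=4m\epsilon/S$ are all right.

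The paper gives no proof of this proposition; it simply lists it among standard concentration tools in the technical preliminaries, so there is no competing route to compare. Like the statement itself, you tacitly assume $\epsilon>0$ and $S=\sum_i (b_i-a_i)^2>0$. The first is needed for $s>0$, and without it the displayed bound can actually fail.
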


Next, we state the property of Bernstein's inequality for the summation of the random variables.

\begin{proposition}[Bernstein's inequality]
\label{prop:bernstein}
Let $X_1,\ldots,X_m$ be $m$ independent random variables such that $\max_i \card{X_i}\leq M$ almost surely. Then, for every $t > 0$, 
\begin{align*}
    P\left( \left| \sum_{i=1}^n X_i \right| \ge t \right) \le 2 \exp\left( -\frac{t^2/2}{\sum_{i=1}^n E[X_i^2] + Mt/3} \right)
\end{align*}
\end{proposition}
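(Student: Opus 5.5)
The statement as written needs the standard implicit hypothesis $\EE[X_i]=0$ for every $i$; without it the bound fails, for instance when all $X_i\equiv M$. I will assume centered variables, write $S=\sum_{i=1}^m X_i$ and $\sigma^2=\sum_{i=1}^m \EE[X_i^2]$, and use the Chernoff exponential-moment method. It suffices to show $\Pr(S\geq t)\leq \exp\paren{-\frac{t^2/2}{\sigma^2+Mt/3}}$. The lower tail follows by applying the same bound to $-X_i$, which are also centered, bounded by $M$, and have the same second moments. A union bound over the two tails then gives the factor $2$.

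\textbf{Step 1 (MGF of one bounded centered variable).} Fix $\lambda>0$ and use the expansion $e^{\lambda x}=1+\lambda x+\sum_{k\geq 2}\lambda^k x^k/k!$. For $|x|\leq M$ and $k\geq 2$ we have $x^k\leq x^2M^{k-2}$. Taking expectations and using $\EE[X_i]=0$ gives
\[
\EE\bracket{e^{\lambda X_i}}\leq 1+\EE[X_i^2]\,\frac{e^{\lambda M}-1-\lambda M}{M^2}\leq \exp\paren{\EE[X_i^2]\,\frac{e^{\lambda M}-1-\lambda M}{M^2}},
\]
where the last step uses $1+y\leq e^y$.

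\textbf{Step 2 (elementary bound on $\phi(u)=e^u-1-u$).} For $0\leq u<3$ I will show $e^u-1-u\leq \frac{u^2/2}{1-u/3}$. This follows by comparing series term by term. We have $e^u-1-u=\frac{u^2}{2}\sum_{k\geq 0}\frac{2u^k}{(k+2)!}$, and $\frac{2}{(k+2)!}\leq 3^{-k}$ because $(k+2)!\geq 2\cdot 3^k$. The right-hand side is exactly $\frac{u^2}{2}\sum_{k\geq 0}(u/3)^k$. Setting $u=\lambda M$ and using independence, for $\lambda<3/M$,
\[
\EE\bracket{e^{\lambda S}}=\prod_{i=1}^m \EE\bracket{e^{\lambda X_i}}\leq \exp\paren{\frac{\lambda^2\sigma^2/2}{1-\lambda M/3}}.
\]

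\textbf{Step 3 (Markov and choice of $\lambda$).} Markov's inequality gives
\[
\Pr(S\geq t)\leq \exp\paren{-\lambda t+\frac{\lambda^2\sigma^2/2}{1-\lambda M/3}}.
\]
I will choose $\lambda=\frac{t}{\sigma^2+Mt/3}$. This satisfies $\lambda M/3<1$, and $1-\lambda M/3=\frac{\sigma^2}{\sigma^2+Mt/3}$. Substituting, the second term equals $\frac{t^2/2}{\sigma^2+Mt/3}$, so the exponent becomes $-\frac{t^2}{\sigma^2+Mt/3}+\frac{t^2/2}{\sigma^2+Mt/3}=-\frac{t^2/2}{\sigma^2+Mt/3}$, as required. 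The degenerate case $\sigma^2=0$ is trivial, since then $S=0$ almost surely.

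The main obstacle is Step 2. Obtaining exactly the constant $Mt/3$ depends on the sharp series comparison $(k+2)!\geq 2\cdot 3^k$, and $\lambda$ must be chosen so that the quadratic-over-linear expression simplifies cleanly. Everything else is routine Chernoff bookkeeping.
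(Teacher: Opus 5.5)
Your proof is correct and complete. The pointwise bound $x^k\le x^2M^{k-2}$, the series comparison via $(k+2)!\ge 2\cdot 3^k$ (tight at $k=0,1$) giving $e^u-1-u\le \frac{u^2/2}{1-u/3}$ on $[0,3)$, and the choice $\lambda=t/(\sigma^2+Mt/3)$ all check out. You also correctly split off the case $\sigma^2=0$, where this $\lambda$ would sit exactly at the forbidden boundary $\lambda M/3=1$.

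The paper gives no proof of this proposition. It quotes Bernstein's inequality as a standard concentration tool in its technical preliminaries, so your write-up is simply the classical Chernoff/moment-generating-function proof of the cited result, and there is no competing route to compare.

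You are right that the printed statement omits the hypothesis $\mathbb{E}[X_i]=0$; it also mixes the indices $m$ and $n$. Without centering the bound is false: with $X_i\equiv M$ and $t=mM$, the left side is $1$ while the right side is $2e^{-3m/8}<1$ for $m\ge 2$. Your centered version is the one the paper actually needs. In the proof of Lemma~\ref{lem:multi-step-variance-control} the inequality is applied to the deviation $x^{(t)}-\mathbb{E}[x^{(t)}]$, i.e.\ to centered increments, which are bounded by $C^+-C^-$.
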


\subsection{The Detailed Proofs}
\subsubsection{Proof of Lemma~\ref{lem:multi-step-variance-control} (The Structural Lemma)}
\begin{proof}
We prove the lemma by iteratively conditioning on the high-probability events for the individuals to succeed. We define the set of scores with increments compared to the previous time step as follows.
\begin{align*}
    \mathcal{X}^{(t)}_{\text{increase}} = \cup_{x} \{x^{(t)}\in G(x)^{(t)}\mid x^{(t)}\geq x^{(t-1)}\}.
\end{align*}
Here, we slightly abuse the notation to let $x^{(t)}$ and $x^{(t-1)}$ be the score of a certain individual so that the increment is well-defined.
By our investment policy, we have $\mathcal{X}^{(t)}_{\text{increase}} \subseteq \mathcal{X}^{(t-1)}_{\text{increase}}\subseteq \cdots \subseteq \mathcal{X}^{(1)}_{\text{increase}}$, where $\mathcal{X}^{(1)}_{\text{increase}}$ contains all scores.
In other words, for the set of scores in $\mathcal{X}^{(t)}_{\text{increase}}$, the random process will always increase the score and the probability $p(x)$ of success. 
Therefore, if we define the random variable
\begin{align*}
    \Delta(x)^{(t)}=
    \begin{cases}
        C^+, \text{with probability $1-\pfail$}\\
        C^-, \text{with probability $\pfail$}
    \end{cases}.
\end{align*}
We have that $\Delta(x)^{(t)}\geq \Delta(x^{(t)})$ for $x^{(t)}\in \mathcal{X}^{(t)}_{\text{increase}}$. 
We work with the random variable $\Delta(x)^{(t)}$ to avoid dependency.

We first show \emph{inductively} that the fraction of the scores we account for is at most $600 \cdot (\frac{1}{2})^{t-1}\cdot \pfail$ fraction with probability at least $199/200$ for the $t$-th time step.
For the $t$-th step, define $\overline{\calX^{(t)}}$ as the set of scores that are not increased in this round, and we have $\expect{\card{\overline{\calX^{(t)}}}}\leq \pfail \cdot  (\frac{1}{3})^{t-1} \card{\mathcal{X}^{(t)}_{\text{increase}}}$ by applying Assumption~\ref{assump:exponential-decrease-of-fail-prob}. 
To elaborate, we can $(1-p(\xmin^{(t)}))$ as the probability for score decrement for the \emph{minimum score} among scores in $\mathcal{X}^{(t)}_{\text{increase}}$. Using this notation, we have $(1-p(\xmin^{(1)}))=\pfail$. Therefore, we can write
\begin{align*}
    \expect{\card{\overline{\calX^{(t)}}}} & = \sum_{x\in \mathcal{X}^{(t)}_{\text{increase}}} (1-p(x)) \tag{by definition}\\
    & \leq (1-p(\xmin^{(t)})) \cdot \card{\mathcal{X}^{(t)}_{\text{increase}}} \tag{using $p(x)$ as a monotone function}\\
    &= (1-p(\xmin^{(t-1)}+C^+)) \cdot \card{\mathcal{X}^{(t)}_{\text{increase}}} \tag{by the dynamics of score changing}\\
    &\leq \frac{1}{3}\cdot (1-p(\xmin^{(t-1)})) \tag{by Assumption~\ref{assump:exponential-decrease-of-fail-prob}}\\
    &\leq (\frac{1}{3})^2 \cdot (1-p(\xmin^{(t-2)}))\cdot \card{\mathcal{X}^{(t)}_{\text{increase}}} \tag{recursively applying the above calculation}\\
    &\leq \cdots \cdots\\
    & \leq (\frac{1}{3})^{t-1} \cdot (1-p(\xmin^{(1)}))\cdot \card{\mathcal{X}^{(t)}_{\text{increase}}} \\
    &= (\frac{1}{3})^{t-1} \cdot \pfail \cdot \card{\mathcal{X}^{(t)}_{\text{increase}}},
\end{align*}
as claimed.
Therefore, by applying the Markov inequality, we have
\begin{align*}
    & \Pr\paren{\card{\overline{\calX^{(t)}}}\geq 600 \cdot (\frac{1}{2})^{t-1}\cdot \pfail\cdot \card{\mathcal{X}^{(t)}_{\text{increase}}}}\\
    & \leq \frac{1}{600}\cdot (\frac{1/3}{1/2})^{t-1} = \frac{1}{600}\cdot (\frac{2}{3})^{t-1}.
\end{align*}
Therefore, we could apply a union bound and show that
\begin{align*}
& \Pr\paren{\card{\overline{\calX^{(t)}}}< 600 \cdot (\frac{1}{2})^{t-1}\cdot \pfail\cdot \card{\mathcal{X}^{(t)}_{\text{increase}}} \text{ for all $t$}}\\ 
& \geq 1-\sum_{t=1}^{\infty} \frac{1}{600}\cdot (\frac{2}{3})^{t-1} \geq 1-\frac{3}{600} = 199/200.
\end{align*}
We condition on this high-probability event for the rest of the proof. We now analyze the deviation of the scores for $x^{(t)}\in \calX^{(t)}_{\text{increase}}$. 
Note that for all such scores, we have $x^{(t)}\geq x+\sum_{\tau=1}^{t}\Delta(x)^{(\tau)}$.
For any $\tau$, we have $M=(C^+-C-)$ and $\expect{(\Delta(x)^{(\tau)})^2}\leq O\paren{(C^{+}-C^{-})^2}$. 
Therefore, by Bernstein's inequality, we have that
\begin{align*}
 & \Pr\paren{x^{(t)}\leq \expect{x^{(t)}}-\sqrt{t} \cdot (C^{+}-C^{-})\polylog(t\cdot \card{C_1 \cup C_3})}\\
 & \leq \exp\paren{-\frac{\Otilde\paren{\sqrt{t} \cdot (C^{+}-C^{-})}^2}{(C^{+}-C^{-})^2\cdot t+\Otilde\paren{\sqrt{t}\cdot (C^{+}-C^{-})^2}}} \\
 & \leq \frac{1}{200 t\cdot \card{C_1 \cup C_3}},
\end{align*}
where $\Otilde(\cdot)$ hides $\polylog(t\cdot \card{C_1 \cup C_3})$ terms.
Therefore, we can apply a union bound over the possible number of scores in $C_1 \cup C_3$ and the $t$ steps to obtain that with probability at least $199/200$, there is 
\begin{align*}
 x^{(t)} &\geq \expect{x^{(t)}}-\sqrt{t} \cdot (C^{+}-C^{-})\polylog(t\cdot \card{C_1 \cup C_3})\\
 &\geq t\cdot C^{+} - \sqrt{t} \cdot (C^{+}-C^{-})\polylog(t\cdot \card{C_1 \cup C_3}),
\end{align*}
if we do \emph{not} care about the maximum scores.
The second inequality is true since we have already conditioned on the set of $\mathcal{X}^{(t)}_{\text{increase}}$.
Therefore, as $t\rightarrow \infty$, the scores in $\mathcal{X}^{(t)}_{\text{increase}}$ would eventually reach $\xmax$, and never decrease due to Assumption~\ref{assump:exponential-decrease-of-fail-prob}. In other words, with probability at least $1-1/200-1/200=99/100$, all but $O(\pfail)$ fraction of scores would fail to reach $\xmax$, as desired.
\end{proof}

\subsubsection{Proof of~\texorpdfstring{\Cref{prop:muilti-step-single-opportunity}}{multi-step-single-opp}}
We give the (re-)statement and proof of \Cref{prop:muilti-step-single-opportunity} as follows.
\ThmMultiStepSingleOpportunity*

\begin{proof}
    At a high level, the statement is true because eventually, every individual who is given the allocation would reach the maximum score. This would mean that the remaining gap between the two groups is restricted to the individuals with scores in $C_4$. Due to Assumption~\ref{assump:no-C4-in-optimal-policy}, the $\pof$ goes to $0$ in the limit as long as the fairness restriction is less restrictive than the gap in category $4$.

    We now formalize the above intuition. We write the expression in Assumption~\ref{assump:real-advantage} in the form of 
    \begin{align*}
        & \paren{\frac{\card{A\cap C_1}}{\card{A}}\cdot \mu^{C_1}_{A} + \frac{\card{A\cap C_3}}{\card{A}}\cdot \mu^{C_3}_{A}} \\
        & \quad - \paren{\frac{\card{B\cap C_1}}{\card{B}}\cdot \mu^{C_1}_{B} + \frac{\card{B\cap C_3}}{\card{B}}\cdot \mu^{C_3}_{B}}> \beta.
    \end{align*}
    Let $\delta=\card{\mu_{A}-\mu_{B}}$ be the initial gap between the mean scores, we write it as a summation of conditional terms:
    \begin{align*}
        \delta & = \card{\mu_{A}-\mu_{B}}\\
        &= \mu_{A}-\mu_{B} \tag{initially group $A$ has a higher score}\\
        &= \frac{\card{A\cap C_1}}{\card{A}}\cdot \mu^{C_1}_{A} - \frac{\card{B\cap C_1}}{\card{B}}\cdot \mu^{C_1}_{B} \\
        & \quad\quad + \frac{\card{A\cap C_3}}{\card{A}}\cdot \mu^{C_3}_{A} - \frac{\card{B\cap C_3}}{\card{B}}\cdot \mu^{C_3}_{B}\\
        &\quad\quad + \frac{\card{A\cap C_4}}{\card{A}}\cdot \mu^{C_4}_{A} -  \frac{\card{B\cap C_4}}{\card{B}}\cdot \mu^{C_4}_{B}.
    \end{align*}
    For any $x\in C_1 \cup C_3$, since the long-term investment policy selects $x$ at each step, and $u(x)$ and $\Delta(x)$ are monotone functions of $x$, we have that $\lim_{T\rightarrow \infty} \EE[\calG(x)^{(T)}] = \{\xmax\}$ for all but $O(\pfail)\cdot \card{C_1 \cup C_3}$ of the scores by Lemma~\ref{lem:multi-step-variance-control}. 
    Define $X_{\text{decrease}}$ as the set of scores that ever decrease during the process, and $\delta_{\text{decrease}}$ as the gap between the scores induced by the scores in $X_{\text{decrease}}$.
    Note that for each score $x\in X_{\text{decrease}}$, the difference in score it could induce is at most $\xmax$ (the scores cannot be negative). 
    Therefore, the total score gaps induced by the scores in $X_{\text{decrease}}$ are at most $\frac{\card{X_{\text{decrease}}}}{\card{X}}\cdot \xmax$, which means
    \begin{align*}
        \delta'(T) \leq \frac{\card{A\cap C_4}}{\card{A}}\cdot \mu^{C_4}_{A} -  \frac{\card{B\cap C_4}}{\card{B}}\cdot \mu^{C_4}_{B} + \frac{\card{X_{\text{decrease}}}}{\card{X}}\cdot \xmax.
    \end{align*}
    Conditioning on the high-probability event of \Cref{lem:multi-step-variance-control}, which implies $\frac{\card{X_{\text{decrease}}}}{\card{X}} \leq O(\pfail)$, we could bound the change of the gaps between scores with
    \begin{align*}
    \lim_{T\rightarrow \infty}\delta-\delta'(T) & \geq \frac{\card{A\cap C_1}}{\card{A}}\cdot \mu^{C_1}_{A} - \frac{\card{B\cap C_1}}{\card{B}}\cdot \mu^{C_1}_{B} \\
    & + \frac{\card{A\cap C_3}}{\card{A}}\cdot \mu^{C_3}_{A} - \frac{\card{B\cap C_3}}{B}\cdot \mu^{C_3}_{B} \\
    &  - O(\pfail)\cdot \xmax\\
    & \geq \beta - \frac{\beta}{N \xmax}\cdot \xmax \tag{applying Assumptions~\ref{assump:real-advantage} and \ref{assump:low-failure-prob-C1-C3}}\\ 
    & >\beta/2.
    \end{align*}
     Rearranging the terms gives us the desired statement.

    For the second part of the statement, note that after the scores in $(C_1 \cup C_3) \setminus X_{\text{decrease}}$ reach $\xmax$, we have 
    \begin{align*}
    \delta'(T) & \leq \frac{\card{A\cap C_4}}{\card{A}}\cdot \mu^{C_4}_{A} -  \frac{\card{B\cap C_4}}{\card{B}}\cdot \mu^{C_4}_{B} + \frac{\card{X_{\text{decrease}}}}{\card{X}}\cdot \xmax \\
    & \leq \max\{\mu^{4}_{A}, \mu^{4}_{B}\}+\beta,
    \end{align*}
    where the last inequality holds due to assumption~\ref{assump:low-failure-prob-C1-C3}.
    Therefore, as long as $\alpha\geq \max\{\mu^{C_4}_{A}, \mu^{C_4}_{B}\}+\beta$, the long-term investment policy is already fair. 
    For the optimal (potentially unfair) policy $\Pi^*$, by Assumption~\ref{assump:no-C4-in-optimal-policy}, it would also aim to bring all scores in $C_1$ and $C_3$ to $\xmax$. Therefore, let $T_0$ be the first step for all scores in $C_1$ and $C_3$ to achieve $\xmax$ by the investment policy. Before $T_0$, on each day, the loss of the utility is at most $(U^{+}-U^{-})\cdot \card{A\cup B}$. After $T_0$, there is no gap between the utility of policies, and we let the daily utility gain be $U^*$. Therefore, we have $\fair(I)\geq (T-T_0)\cdot U^*\cdot (1-O(\pfail))\cdot \card{C_1 \cup C_3}- T_0\cdot (U^{+}-U^{-})\cdot \card{C_1 \cup C_3}$ and $\opt(I)\leq T\cdot U^* \cdot \card{C_1 \cup C_3}$, which implies
    \begin{align*}
        & \lim_{T\rightarrow \infty} \pof \\
        & \leq \lim_{T\rightarrow \infty} 1 - \frac{\paren{(T-T_0)\cdot U^*\cdot (1-O(\pfail))- T_0\cdot (U^{+}-U^{-})}}{T\cdot U^*}\\
        & = \lim_{T\rightarrow \infty} 1 - \frac{T \cdot (1-O(\pfail)) \cdot U^*}{T\cdot U^*} \tag{$T_0$, $U^+$, $U^-$ are finite} \\
        &=O(\pfail), 
    \end{align*}
    which is as desired.
\end{proof}

\subsubsection{Proof of~\texorpdfstring{\Cref{prop:muilti-step-multi-opportunity}}{multi-step-multi-opp}}
We give the (re-)statement and proof of \Cref{prop:muilti-step-multi-opportunity} as follows.
\ThmMultiStepMultiOpportunity*

\begin{proof}[Proof of~\Cref{prop:muilti-step-multi-opportunity}]
For each score $x^{(t)}$, we prove the concentration of the change of scores due to the multiple opportunities provided to each individual.
By Hoeffding bound supported on $[a_i, b_i]$ (\Cref{prop:hoeffding-general} in Appendix~\ref{app:tech-prelim}), we have
\begin{align*}
    \Pr\paren{\card{\Delta(x^{(t)})-\expect{\Delta(x^{(t)})}}\geq \eps}& \leq \exp\paren{-\frac{\eps^2 m^2}{\sum_{i=1}^{m}(b_i - a_i)^2}}\\
    &\leq \exp\paren{-\frac{\eps^2 m}{(b_i - a_i)}} \tag{$b_i-a_i\leq C^+-C^-$}\\
    &\leq \exp(-5\eps^2 \log{T}). \tag{$m\geq 5(C^+-C^-)\log{T}$.}
\end{align*}
Plugging $\eps=1$, we have 
\begin{align*}
    \Pr\paren{\card{\Delta(x^{(t)})-\expect{\Delta(x^{(t)})}}\geq 1}& \leq \frac{99}{100T}.
\end{align*}
Therefore, we could apply a union bound, and show that with probability at least $99/100$, it is always true that $\Delta(x^{(t)})>\expect{\Delta(x^{(t)})}-1$. By Assumption~\ref{assump:change-of-score}, we have $\expect{\Delta(x^{(t)})}\geq 1$, which means $\Delta(x^{(t)})>0$ for all steps with probability at least $99/100$.

The rest of the proof is very similar to the proof of \Cref{prop:muilti-step-single-opportunity}: by a similar calculation, we could obtain that 
\begin{align*}
    \delta-\delta'(T) & \geq \frac{\card{A\cap C_1}}{\card{A}}\cdot \mu^{C_1}_{A} - \frac{\card{B\cap C_1}}{\card{B}}\cdot \mu^{C_1}_{B} \\
    & + \frac{\card{A\cap C_3}}{\card{A}}\cdot \mu^{C_3}_{A} - \frac{\card{B\cap C_3}}{B}\cdot \mu^{C_3}_{B} \\\\
    & >0. \tag{applying Assumption~\ref{assump:real-advantage}}
\end{align*}
For the PoF, note that as long as $\alpha\geq \max\{\mu^{4}_{A}, \mu^{4}_{B}\}$, the ``even simpler'' investment policy is fair. 
As such, we could again let $T_0$ be the first step for all scores in $C_1$ and $C_3$ achieve $\xmax$ by the investment policy. A utility lower bound on the optimal fair policy would therefore be $((T-T_0)\cdot U^* - T_0\cdot (U^{+}-U^{-}))\cdot \card{C_1\cup C_3}$. In comparison, an upper bound for the optimal polity is $U^* \cdot T \cdot \card{C_1\cup C_3}$. 
    \begin{align*}
        \lim_{T\rightarrow \infty} \pof & \leq \lim_{T\rightarrow \infty}  1 - \frac{(T-T_0)\cdot U^* - T_0\cdot (U^{+}-U^{-})}{T\cdot U^*} \\
        &=0, \tag{$T_0$, $U^+$, $U^-$ are finite}
    \end{align*}
which is as claimed by the proposition.
\end{proof}

\FloatBarrier
\section{Omitted Details from Section~\ref{sec:exp}}
\label{sec:appendix-exp}

\subsection{Additional Implementation Details}
\label{sec:appendix-exp-additional-details}
In all experiments, we use $W_A=0.7$ and $W_B=0.3$, where $A$ is the group with the higher mean. For the synthetic data, the scores are drawn from two Gaussian distributions and then post-processed to integers ranging from 0 to 100. For the single-step setting, we use $80$ and $60$ as the means of groups $A$ and $B$, respectively. For the multi-step setting, we use $90$ and $70$ as the means of groups $A$ and $B$, respectively. In all experiments, we set the variance of both distributions to $30$. For the baseline instance for the synthetic dataset, we use $U^+=2$, $U^-=-2$, $C^+=2$, $C^-=-1$. For the baseline instance for the FICO dataset, we use $U^+=1$, $U^-=-2$, $C^+=7$, $C^-=-14$. For the high risk and cost instance for the synthetic dataset, we use $U^+=2$, $U^-=-20$, $C^+=2$, $C^-=-10$. All experiments are written in Python and were run on a MacBook.

\begin{figure}[ht!]
    \centering
    \includegraphics[width=0.8\textwidth]{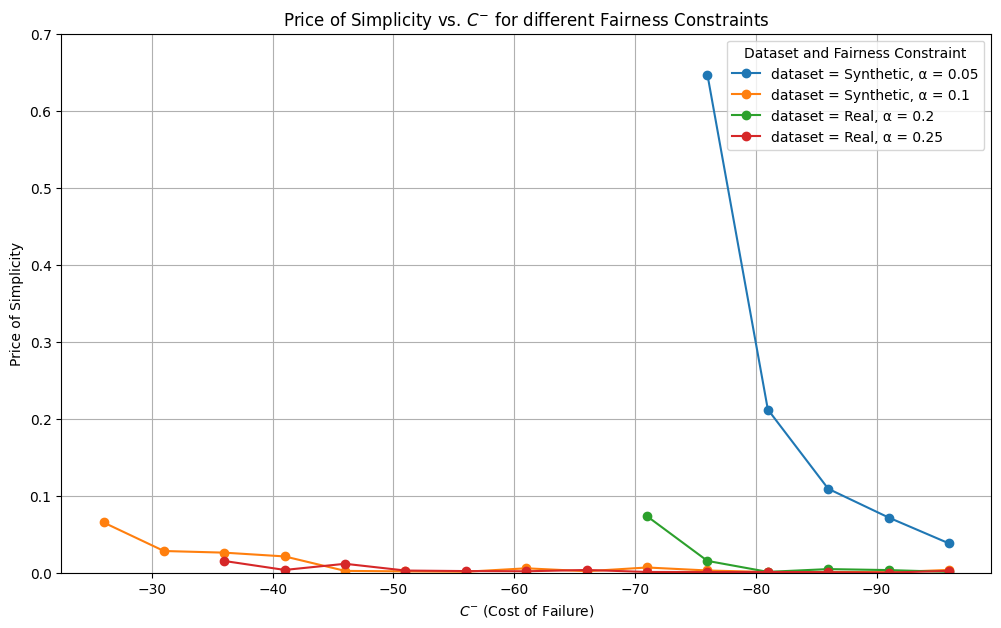}
    \caption{Price of Simplicity (PoS) for Different datasets and parameters.\label{fig:robust-assumpt-2}}
\end{figure}

\subsection{Sensitivity Analysis}
\label{sec:appendix-exp-sensitivity}
Motivated by~\Cref{thm:thresh_opt}, we also implemented an algorithm that searches over all possible thresholds for each group. This algorithm needs to potentially randomize at the threshold (see Definition~\ref{def:threshold-policy}). Beyond Assumption~\ref{assumpt:p}, the optimality of single-step threshold policies also requires Assumption~\ref{assumpt:struc_assumption}. We examine robustness to violations of this assumption by decreasing $C^-$ in the synthetic and FICO datasets while holding other parameters fixed. For a given $\alpha$, we measure the \emph{Price of Simplicity} (PoS), defined as $1$ minus the ratio of the utility achieved by the fair thresholding solution to that of the LP-based fair solution, which does not rely on Assumption~\ref{assumpt:struc_assumption}. PoS lies in $[0,1]$, with lower values indicating smaller utility loss from thresholding. Figure~\ref{fig:robust-assumpt-2} depicts the PoS for different combinations of $\alpha$ and dataset for the thresholding algorithm that \emph{does not} randomize at the threshold. We observe that PoS peaks under mild violations but rapidly approaches 0 as violations become severe. With randomizing over 10 values at the threshold, we observe that PoS remains close to 0 for all settings (not pictured), demonstrating the effectiveness of the simple thresholding policies.

\subsection{Multi-Step Setting}
\label{sec:appendix-exp-multi-step}

Figure~\ref{fig:multi-gap-appendix} is the replica of Figure~\ref{fig:multi-gap} over a time step of $T=50$ and using only a population of $N=10,000$. We observe that with a smaller population size, we still see the same behavior but the error bars are significantly larger.
\begin{figure}[ht!]
    \centering
    \includegraphics[width=0.8\textwidth]{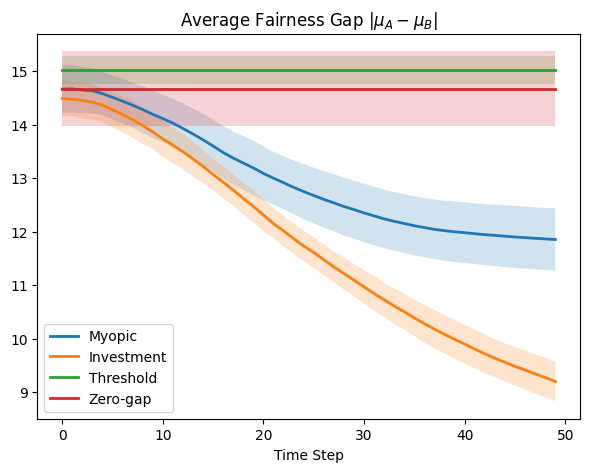}
    \caption{The average gap between group means over time.\label{fig:multi-gap-appendix}}
\end{figure}

\end{document}